\newtheorem{theorem}{Theorem}
\newtheorem{definition}[theorem]{Definition}
\newtheorem{proposition}[theorem]{Proposition}
\newtheorem{corollary}[theorem]{Corollary}
\newtheorem{remark}[theorem]{Remark}
\newtheorem{example}[theorem]{Example}
\newcommand{\mo}{\mathfrak{o}}
\newcommand{\am}{\text{argmax}}
\newcommand{\p}{\mathbb{P}}
\newcommand{\R}{\mathbb{R}}
\newcommand{\rank}{\operatorname{rank}}
\newcommand{\SBM}{\operatorname{SBM}}
\newcommand{\asymSBM}{\operatorname{asym-SBM}}
\newcommand{\RDPG}{\operatorname{RDPG}}
\newcommand{\ER}{\operatorname{ER}}
\newcommand{\RER}{R\operatorname{-ER}}
\newcommand{\calG}{\mathcal{G}}
\newcommand{\calT}{\mathcal{T}}
\newcommand{\calV}{\mathcal{V}}
\newcommand{\calX}{\mathcal{X}}
\newcommand{\bp}{\boldsymbol{\Phi}}
\newcommand{\calVnm}{\calV_{n,m}}
\newcommand{\ErdosRenyi}{\text{Erd\H{o}s-R\'{e}nyi }}
\newcommand{\Phistar}{\Phi^*}
\newcommand{\Lstar}{L^*}
\newcommand{\vstar}{v^*}
\newcommand{\calGn}{\calG_n}
\newcommand{\calGm}{\calG_m}
\newcommand{\gn}{\calGn}
\newcommand{\gm}{\calGm}
\newcommand{\BayesVN}{\Phistar}
\newcommand{\PFtau}{\mathbb{P}_{(G_1,G_2)\sim F_{c,n,m,\theta}}}
\newcommand{\PFt}{\mathbb{P}_{ F_{c,n,m,\theta}}}
\begin{document}

\title{On consistent vertex nomination schemes}

\author[$\dag$]{Vince Lyzinski}
\author[$\ddag$]{Keith Levin}
\author[$^*$]{Carey E. Priebe}

\affil[$\dag$]{\small Department of Mathematics and Statistics, University of Massachusetts Amherst}
\affil[$\ddag$]{\small Department of Statistics, University of Michigan}
\affil[$^*$]{\small Department of Applied Mathematics and Statistics, Johns
    Hopkins University}

\maketitle

\begin{abstract}
Given a vertex of interest in a network $G_1$, the vertex nomination problem seeks to find the corresponding vertex of interest (if it exists) in a second network $G_2$.
A vertex nomination scheme produces a list of the vertices in $G_2$, ranked according to how likely they are judged to be the corresponding vertex of interest in $G_2$. 
The vertex nomination problem and related information retrieval tasks have attracted much attention in the machine learning literature, with numerous applications to social and biological networks. However, the current framework has often been confined to a comparatively small class of network models, and the concept of statistically consistent vertex nomination schemes has been only shallowly explored. In this paper, we extend the vertex nomination problem to a very general statistical model of graphs. Further, drawing inspiration from the long-established classification framework in the pattern recognition literature, we provide definitions for the key notions of Bayes optimality and consistency in our extended vertex nomination framework, including a derivation of the Bayes optimal vertex nomination scheme. In addition, we prove that no universally consistent vertex nomination schemes exist. Illustrative examples are provided throughout.
\end{abstract}


\section{Introduction}
\label{sec:intro}

%
%
%
%
Statistical inference on graphs is an important branch of modern statistics and machine learning.
In recent years, there have been numerous papers in the literature developing graph analogues of statistical inference tasks such as hypothesis testing \cite{asta2014geometric,tang2014nonparametric}, classification \cite{tang2013universally,chen2016robust}, and clustering \cite{von2007tutorial,rohe2011spectral,SusTanFisPri2012,NewCla2016}.
Moreover, growth in the size and complexity of network data sets have necessitated techniques for network-specific data mining tasks such as link prediction \cite{liben2007link,lu2011link}; entity resolution and network alignment \cite{ConteReview,lyzinski2016information};
and vertex nomination \cite{CopPri2012,Coppersmith2014,suwan2015bayesian,FisLyzPaoChePri2015,lyzinski2016consistency}.
Akin to the development of classical statistics, algorithmic advancement has, in many ways, outpaced theoretical developments in these emerging graph-driven domains.
This development has been necessitated by the dizzying pace of data generation,
but there is nevertheless the need for a firm theoretical context
in which to frame algorithmic progress.
Toward this end, in this paper, drawing inspiration from the long-established classification framework in the pattern recognition literature \cite{DGL}, we provide a rigorous theoretical framework for understanding statistical consistency in the vertex nomination  (VN) inference task.

\begin{figure}[t!]  
  \centering
  \includegraphics[width=0.50\columnwidth]{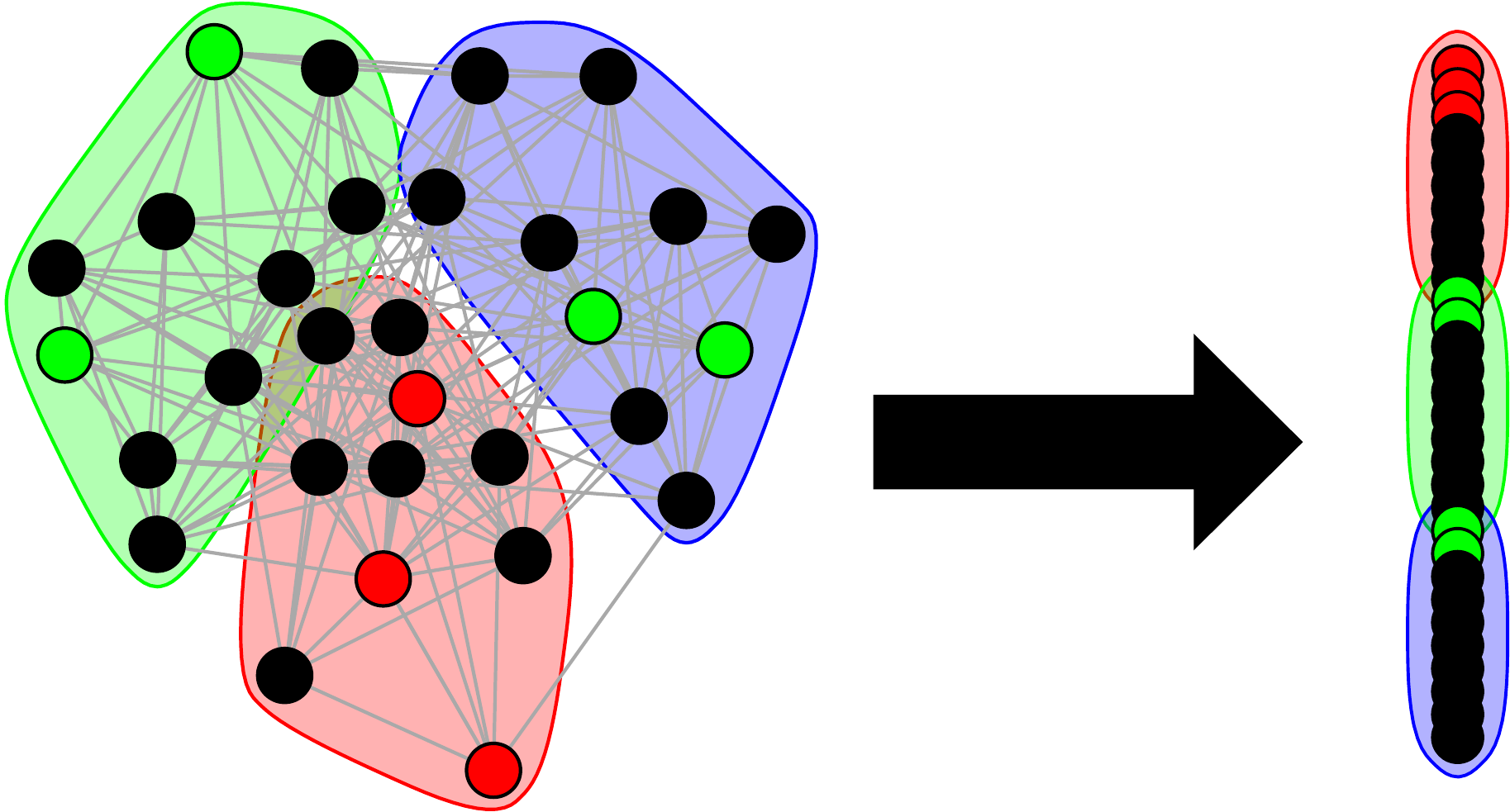}
  \caption{A visual representation of the classical Vertex Nomination framework:  Given a community of interest in a network (here the red community) and some examples of vertices that are/are not part of the community of interest (colored red and green, respectively), rank the remaining vertices in the network into a nomination list, with those vertices from the community of interest concentrating at the top of the nomination list.}
  \label{fig:VNv1}
\end{figure}

The inference task in vertex nomination, which can be viewed as the graph analogue of the more classical recommender system task \cite{ricci2011introduction}, has traditionally been stated as follows:
given a community of interest in a network and some examples of vertices that are
or are not part of a community of interest, vertex nomination seeks to rank the remaining vertices in the network into a nomination list, with those vertices from the community of interest (ideally) concentrating at the top of the nomination list.
See Figure~\ref{fig:VNv1} for a visual representation of this classical Vertex Nomination framework.
In limited-resource settings, vertex nomination tools have proven to be effective in efficiently searching and querying large networks, with
applications including
detecting fraudsters in the Enron email network \cite{CopPri2012,marchette2011vertex,suwan2015bayesian},
uncovering web advertisements that have association with human trafficking \cite{FisLyzPaoChePri2015}, and
identifying latent structure in connectome data \cite{FisLyzPaoChePri2015,yoder2018vertex}.

While related to the community detection problem \cite{Newman2006,von2007tutorial,BicChe2009,NewCla2016}, this traditional formulation of the VN problem is a semi-supervised inference task whose output is not an assignment of vertices to communities, but rather a ranked estimate of which vertices belong to a particular community of interest.
That is, in contrast to community detection, the VN problem does not aim to recover the community memberships of any vertices not in the community of interest.
Clearly, any method that can recover the community memberships of all vertices in a graph can recover the interesting community, and hence any community detection algorithm can be repurposed for the VN problem just described with minor adaptation (e.g., by ranking vertices according to their probability of membership in the community of interest); see, for example, the spectral vertex nomination scheme of \cite{FisLyzPaoChePri2015}.
The specific performance of such an adaptation is highly dependent on the fidelity of the base clustering procedure, and the performance is often below that of the semi-supervised VN specific analogues \cite{yoder2018vertex}.

The above formulation of the VN task assumes the presence of strong community structure among the vertices of interest in the graph.
In practice, this is often a reasonable assumption,
particularly if it is expected that interesting vertices will behave similarly to one another in the network.
However, the particular features that mark a vertex as interesting are entirely task-dependent.
To paraphrase the common proverb, interestingness is in the eye of the practitioner.
Interesting vertices may be, for example, those with large network centrality \cite{jeong2001lethality,newman2005measure}, those with a particular role in the network \cite{lusseau2004identifying}, or those corresponding to a given user across social networks \cite{patsolic2017vertex}.
In these applications, interesting vertices need not correspond precisely to the community structure captured by a generative network model, and hence such cases are ill-described by the community-based VN problem described above.
To accommodate this task-dependency and broader notion of interesting vertices,
we consider the following generalization and extension of the previously-presented VN problem:
Given a vertex of interest $v^*$ in a graph $G_1=(V_1,E_1)$,
find the corresponding vertex of interest $u^*$ (if it exists) in a second graph $G_2=(V_2,E_2)$ by ranking the vertices of $G_2$ according to our confidence that they correspond to $v^*$ in graph $G_1$; see Figure \ref{fig:VNv2} for a visual representation of this VN framework.
In this formulation, which is an (potentially) unsupervised inference task, what defines $v^*$ as interesting is entirely model-dependent, and different network models can highlight different characteristics of interest in the graph.
Potential application domains for this VN generalization abound, 
including identifying users of interest across social network platforms  (see, for example, \cite{patsolic2017vertex}), 
identifying structural signal across connectomes (see, for example, \cite{sussman2018matched}), 
and identifying topics of interest across graphical knowledge bases (see, for example, \cite{sun2013efficiency}).

\begin{figure}[t!]  
  \centering
  \includegraphics[width=0.75\columnwidth]{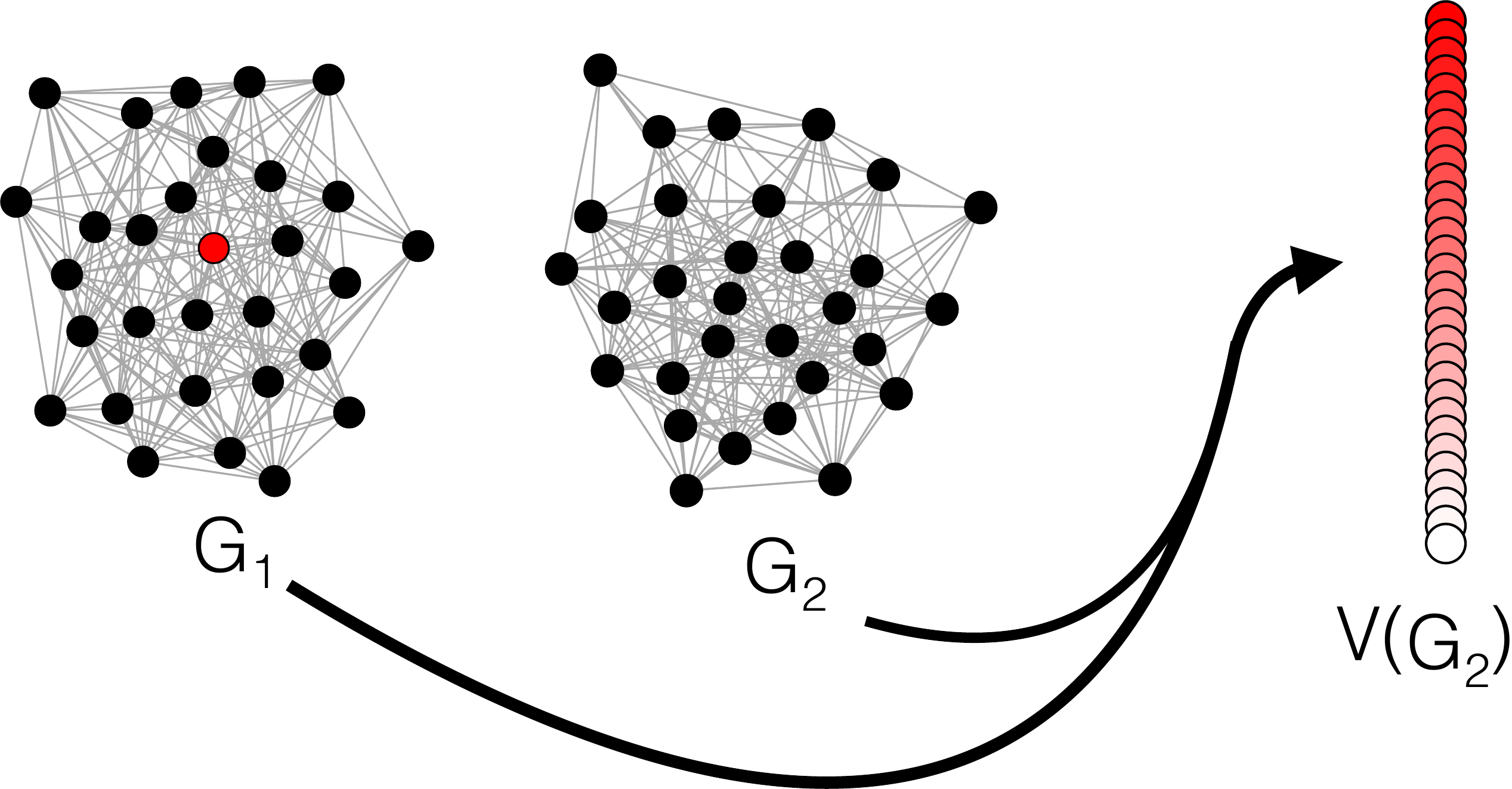}
  \caption{A visual representation of the generalized Vertex Nomination framework:  Given a vertex of interest $v^*$ (colored red) in a graph $G_1 = (V_1, E_1)$, find the corresponding vertex of interest $u^*$ (if it exists) in a second graph $G_2 = (V_2, E_2)$, ranking the vertices of $G_2$ into a nomination list so that $u^*$ ideally appears at the top of the nomination list.}
\label{fig:VNv2}
\end{figure}

In \cite{FisLyzPaoChePri2015} and \cite{lyzinski2016consistency}, the notion of a consistent vertex nomination scheme (i.e., an asymptotically optimal solution to the VN problem) was proposed for the
original formulation of the VN problem, in which community membership entirely determines whether or not a given vertex is interesting.
This definition of consistency was based on the \emph{mean average precision} (MAP) of a nomination scheme operating on a graph model with explicit community structure encoded by the the Stochastic Block Model (SBM) of \cite{sbm}.
Under this restricted notion of consistency, \cite{FisLyzPaoChePri2015} derived the analogue of universal Bayes optimality in the VN setting, namely a scheme that achieves the optimal mean average precision for all parameterizations of the underlying SBM.
While this derivation of the Bayes optimal scheme somewhat parallels the derivation of the Bayes optimal classifier in the classical pattern recognition literature, the SBM model assumption and MAP formulation greatly narrow the set of models and sets of interesting vertices we can consider.
In this paper, we revamp and generalize the concept of VN consistency---and of VN Bayes optimality---in the two-graph VN framework.
This framework is quite general, and further allows us to highlight the similarities and differences between our new VN consistency formulation and its analogue in the classification literature defined in, for example, \cite{DGL}.

The paper is laid out as follows.
In the remainder of this section, we provide brief overviews of information
retrieval as it relates to vertex nomination (Section~\ref{sec:IR}) and the
Bayes optimal classifier in the classical setting (Section~\ref{sec:BE}),
and conclude the introduction by establishing notation for the
remainder of the paper (Section~\ref{sec:notation}).
In Section \ref{sec:VN}, we define the VN problem framework that is the focus of this paper, and in Section \ref{sec:VNBEBO} we derive the VN analogue of a Bayes optimal scheme. In Section \ref{sec:VNCon}, we define a new notion of VN consistency, and we prove that no universally consistent VN scheme exists, providing an interesting contrast to the standard classification setting.
We conclude in Section \ref{sec:diss} with a short summary comparing and contrasting VN with classical classification and a discussion of implications and future directions.

\subsection{Connections to information retrieval}
\label{sec:IR}
The vertex nomination task is, in some ways, similar to the task faced by
recommender systems
\cite{resnick1997recommender,ricci2011introduction},
in which the aim is to retrieve objects (e.g., documents or images)
likely to be of interest to a user based on his or her previous behavior.
For example, the celebrated PageRank algorithm~\cite{BriPag1998}
recommends webpages based on random walks on the world wide web graph,
in which websites are nodes and (directed) edges reflect hyperlinks
between pages.
The information retrieval (IR) literature
includes many such graph-based approaches.
We refer the reader to \cite{ricci2011introduction} and \cite{MihRad2011}
for the state of the art circa 2010, and concentrate here on recapping
more recent graph-based information retrieval techniques.

Many graph-based IR techniques rely on the assumption that similar
objects (i.e., documents, webpages, etc.) will lie near one another
in a suitably-constructed graph,
an intuition underlying many graph-based approaches throughout machine
learning and related disciplines; see, for example,
\cite{BelNiy2003,ZhoBouLalWesSch2009}.
Techniques along these lines have been applied toward many tasks in
natural language processing, typically inspired by PageRank
\cite{RotSch2014}.
Along similar lines,
\cite{MaKinLyu2012} applies a diffusion-based method \cite{CoiLaf2006} to
the world wide web graph to yield an approach to ranking
for query completion and recommendation.
These information retrieval techniques can be naturally adapted to the vertex nomination problem by treating the vertex or vertices of interest as the object or objects to be retrieved.

The vertex nomination problem also bears similarities to the task of \emph{learning to rank} \cite{Duh2009,Liu2009,Li2011},
in which the goal is to learn an ordering on a set of objects
(i.e., documents, images, videos, etc.) according to (estimated) similarity
or relevance to a given query object.
In the learning to rank literature, graphs usually appear as training
instances, with nodes corresponding to objects
and edges encoding preferences or similarities among them elicited from users
(e.g., an undirected weighted edge may join two documents judged to be similar).
The work in \cite{AgaChaAgg2006} is among the earliest to consider the problem of ranking objects in a network.
The authors modified the PageRank algorithm to take
preference information into account, rather than working solely with
the hyperlink graph.
In \cite{Agarwal2010}, the authors use a \emph{data graph} encoding object similarities
to obtain a regularizer similar to \cite{BelNiySin2006}
on the empirical ranking error,
with the target ranking encoded in a \emph{preference graph}.
More recent efforts along these lines have focused on the problem of
incorporating network structure present between entities of different types,
for example, between users and events in a social network
\cite{LuoPanWanLin2014,PhaLiConZha2016}.
Here again, any learning to rank algorithm has a natural adaptation to the VN problem by using the  first graph, in which some vertices are labeled, as training data to learn a ranking on the vertices of the second graph.

\subsection{Bayes error in classical pattern recognition}
\label{sec:BE}
In this section, we review the concepts of consistency and Bayes error from the statistical classification literature. We do not aim to give an exhaustive overview of the subject, but only to provide a rough outline as to the structures that we would like to replicate in the context of vertex nomination.
For a more thorough treatment, we refer the interested reader to \cite{DGL}, whose presentation we follow below.

We begin by recalling the classical definition of Bayes error.
Note that we will restrict our attention to the two-class problem to maximally bring forth the similarities (and differences) between statistical classification and VN, as in VN vertices are either of interest or not.
\begin{definition}
\label{def:classifier}
Consider a set of potential observations $\mathcal{X}$ and a set of unknown class labels $\{0,1\}$ for objects in $\mathcal{X}$.
A classifier is a function $h:\mathcal{X}\rightarrow\{0,1\}$, which aims to predict the class label of a given observation in $\mathcal{X}$.
Given a distribution $F$ supported on $\mathcal{X}\times\{0,1\}$, the error for the classifier $h$ is given by $L(h)=\p( h(X) \neq Y )$ where $(X,Y)\sim F$.
\end{definition}
\noindent Any classifier that achieves the lowest possible error is said to be a {\em Bayes optimal} classifier.
We write $h^*$ for any such optimal classifier, which by definition satisfies
$ h^* \in \arg \min_{h : \calX \rightarrow \{0,1\}} L(h). $
It is easily seen in this two-class framework that the Bayes optimal classifier is given by
\begin{equation}
\label{eq:classbe}
h^*(x)=\begin{cases}
1&\text{ if }\,\mathbb{E}(Y|X=x)=\p(Y=1|X=x)>1/2;\\
0&\text{ else}.
\end{cases}
\end{equation}

Practically speaking, the Bayes optimal scheme chooses the label which maximizes the class-conditional probability of the observed data.
The corresponding error, $L^* = L(h^*)$, is called the \emph{Bayes error}.
Of course, $h^*$ depends on the distribution $F$ of $(X,Y)$, and, when appropriate, we will make this dependence explicit by writing $L^*_F$.

In practice, a classifier is often constructed based on training data
$(X_1,Y_1),(X_2,Y_2),$ $\dots,(X_n,Y_n)$, where the data
$(X_i,Y_i)$ are drawn i.i.d.~according to $F$.
This supervised classification framework is defined as follows.
\begin{definition}
\label{def:classifiern}
Consider a set of potential observations $\mathcal{X}$ and a set of unknown class labels $\{0,1\}$ for objects in $\mathcal{X}$.
A (supervised) classifier is a function
$$h_n:\mathcal{X}\times\{\mathcal{X}\times\{0,1\}\,\}^n\rightarrow\{0,1\},$$
which aims to predict the class label of a given observation in $\mathcal{X}$ based on $n$ training observations $(x_1,y_1),(x_2,y_2),\dots,(x_n,y_n)\in\mathcal{X}\times\{0,1\}$.
Given a distribution $F$ supported on $\mathcal{X}\times\{0,1\}$, the error for the classifier $h_n$ is given by
$$L_F(h_n)=\p\big[ h_n(X,(X_i,Y_i)_{i=1}^n) \neq Y \mid (X_i,Y_i)_{i=1}^n \big]$$
where $(X,Y),(X_1,Y_1),(X_2,Y_2),\dots,(X_n,Y_n)\stackrel{i.i.d.}{\sim} F.$
\end{definition}
\noindent Note that
$L_F(h_n)$ is a random variable in which $\{(X_i,Y_i)\}_{i=1}^n$ are drawn
i.i.d. from $F$, but then held fixed as we average over $(X,Y) \sim F$.

A sequence of classifiers $\mathbf{h}=(h_n)_{n=1}^\infty$ is called a {\em classification rule}.
Informally, a good classification rule is one for which the probability of error becomes arbitrarily close to Bayes optimal as $n\rightarrow\infty$.
The precise nature of what we mean by close is codified in the concept of statistical consistency.
\begin{definition}
\label{def:classifierconsis}
A classification rule $\mathbf{h}=(h_n)_{n=1}^\infty$ is {\em consistent} with respect to $F$ if
$$\mathbb{E}_F(L(h_n))\rightarrow L^*_F.$$
The rule $\bf h$ is {\em strongly consistent} if
$$L_F(h_n)\stackrel{a.s.}{\rightarrow} L^*_F.$$
A rule that is (strongly) consistent for all distributions $F$ on $\calX \times \{0,1\}$ is called \emph{(strongly) universally consistent}.
\end{definition}
\noindent
Perhaps surprisingly, given that $F$ can have arbitrary structure on $\mathcal{X}\times\{0,1\}$, universally consistent classification rules exist; see
\cite{Stone1977} for the first proof of this phenomenon.

In \cite{FisLyzPaoChePri2015}, a notion of consistency for vertex nomination was presented, roughly analogous to Definition~\ref{def:classifierconsis}.
In contrast to the classification task presented above, vertex nomination requires a ranking of the vertices, rather than merely the classification of a single vertex.
As such, a vertex nomination scheme is evaluated
in \cite{FisLyzPaoChePri2015} based on
\emph{average precision} \cite{ManRagSch2008},
rather than simply a fraction of correctly-classified vertices.
In \cite{FisLyzPaoChePri2015}, VN consistency is defined in the context of stochastic block model (SBM) random graphs with respect to a
provably optimal \emph{canonical} nomination scheme.
This canonical scheme plays an analogous role of Bayes optimal classifiers
in this restricted model framework (see Section~\ref{sec:VNBEBO} below).
The goal of this paper is to explore and further develop
a broader notion of VN consistency that encompasses a more expressive class of models than the SBM.

\subsection{Notation and background} \label{sec:notation}
We conclude this section by establishing notation and reviewing a few of the more popular statistical network models that we will make use of as examples in the sequel.

\subsubsection{Notation}

For a set $S$, we let $|S|$ denote its cardinality and
$\binom{S}{2}$ denote the set of all unordered pairs of distinct elements from $S$.
Throughout, we will denote graphs via the ordered pair $G=(V,E)$,
with vertices $V$ and edges $E \subseteq \binom{V}{2}$.
All graphs considered herein will be labeled, hollow (i.e., containing
no self-edges), and undirected.
We let $\calG_n$ denote the set of all labeled, hollow, undirected graphs on $n$ vertices.
Given a graph $G$, we will let $V(G)$ denote the vertices of $G$ and $E(G)$ denote its edges.
We note that when $G$ is random, this latter set is a random subset of $\binom{V}{2}$.
For a set of vertices $S \subseteq V(G)$, we let $G[S]$ denote the subgraph of $G$ induced by $S$,
i.e., the graph $G' = (S,E)$ with $\{u,v\} \in E$ if and only if $\{u,v\} \in E(G)$.
In a few places, we will require the notion of an {\em asymmetric} graph.
A graph $G \in \calG_n$ is {\em asymmetric} if it has no nontrivial automorphisms \cite{ErdRen1963}.
For a positive integer $n\in\mathbb{Z}$, we will define $[n] = \{1,2,\dots,n\}$, and $\gn$ to be the be the set of labeled graphs on $n$ vertices.
Throughout this paper, we will often, in order to simplify notation, suppress dependence of parameters on $n$.
Throughout, the reader should assume that, unless specified otherwise, all parameters depend on the number of vertices $n$.

\subsubsection{Stochastic block models}
The stochastic block model (SBM) is a widely studied model for edge-independent random graphs with latent community structure \cite{sbm,Hoff2002,karrer11:_stoch}.
\begin{definition}
\label{def:sbm}
We say that a random graph $G = (V,E) \in \calGn$ is an instantiation of a stochastic block model with parameters $(K,B,b)$, written $G\sim \SBM(K,B,b)$, if
\begin{itemize}
\item[i.] $V$ is partitioned into $K$ classes (called communities or blocks), $V = V_1 \cup V_2 \cup \dots \cup V_K$.
\item[ii.] The block membership vector $b \in [K]^{|V|}$ is such that for all $k \in [K]$, $b_v = k$ if and only if $v \in V_k$.
\item[iii.] The symmetric matrix $B \in [0,1]^{K \times K}$ denotes the edge probabilities between and within blocks, with
$$\mathbbm{1}_{\{\,\{u,v\} \in E(G)\}}\stackrel{ind.}{\sim}\text{Bernoulli}(B_{b_u,b_v}).$$
\end{itemize}
\end{definition}
\noindent We note that when $K=1$, we recover the \ErdosRenyi random graph \cite{Erdos}, in which the edges of $G$ are present or absent independently with probability $p$.
In this special case, we write $G \sim \ER(n,p)$.
By a slight abuse of notation, for a symmetric matrix $P \in [0,1]^{n \times n}$, we will write $G \sim \ER(P)$ if, identifying the vertices of $G$ with $[n]$, we have $\{i,j\} \in E(G)$ with probability $P_{i,j}$ independent of the other edges.
With no restrictions on $P$, $\ER(P)$
random graphs can be viewed as $n$-block SBMs and are the most general edge-independent random graph model.

The latent community structure inherent to SBMs makes them a natural model for use in the traditional vertex nomination framework. Recall the traditional VN task:  given a community of interest in a network and some examples of vertices that are or are not part of the community of interest, vertex nomination seeks to rank the remaining vertices in the network into a nomination list, with those vertices from the community of interest (ideally) concentrating at the top of the nomination list.
As a result, previous work on VN consistency \cite{FisLyzPaoChePri2015} has been posed within the SBM framework, with the optimal scheme only obtaining its optimality for SBMs.
We note that we consider herein the SBM setting where communities are disjoint and each vertex can only belong to a single community.
However, the results contained herein translate immediately to the mixed membership SBM setting \cite{Airoldi2008}; details are omitted for brevity.

\subsubsection{Random dot product graphs}
\label{sec:rdpg}

In stochastic block models, the block assignment vector can be viewed as a latent feature vector for the vertices in the network, with these features (i.e., block memberships) defining the connectivity structure in the network.
The random dot product graph (RDPG) model \cite{young2007random} allows for more nuanced vertex features to be incorporated into the model and has been used as the setting for a VN formulation similar to the one proposed here; see \cite{patsolic2017vertex} for details.
\begin{definition}
\label{def:rdpg}
We say that a random graph $G = (V,E) \in \calGn$ is an instantiation of a $d$-dimensional random dot product graph with parameters $X$, written $G\sim \RDPG(X)$, if
\begin{itemize}
\item[i.] The matrix $X \in \R^{n \times d}$ is such that $0 \le (X X^T)_{i,j} \le 1$ for all $i,j \in [n]$.
The rows of $X$ provide the latent features for the vertices in $V$.
\item[ii.] The edges of $G$ are present or absent independently, with $\{i,j\} \in E(G)$ with probability $(X X^T)_{i,j}$.
Written succinctly, $G\sim\ER(XX^T)$.
\end{itemize}
\end{definition}
\noindent We can view the RDPG model as a example of the more general \emph{latent position} random graph model \cite{Hoff2002}, in which edge probabilities are determined by hidden vertex-level geometry.

Estimating the latent position structure in RDPGs is particularly amenable to spectral methods, and this model has played a prominent role in recent theoretical developments of spectral graph methods; see, for example, \cite{rohe2011spectral,SusTanFisPri2012,MT2}.
Note that the RDPG can be extended to a broader class of models, in which edge probabilities are given by evaluating a positive definite link function at vertices' latent positions as in, for example, \cite{tang2013universally}.
While incorporating this more general family of latent position graphs into the present VN framework would be straightforward, we restrict our focus to the RDPG model of Definition \ref{def:rdpg} for ease of exposition.

\subsubsection{Correlation across networks}
\label{sec:corr}

The vertex nomination problem we consider in this paper presupposes the existence of a vertex of interest in a network $G_1$ and, ideally, a corresponding vertex of interest in a second network $G_2$.
Often, such correspondences across networks are encoded into random graph models via edge-wise graph correlation; see, for example, \cite{ModFAQ}.
Arguably the simplest such structure is seen in the $\rho$-correlated \ErdosRenyi model of \cite{rel}.
\begin{definition}
We say that bivariate random graphs $(G_1,G_2) \in\calGn\times\calGn$ are an instantiation of a $\rho$-correlated $\ER(P)$ model, written $(G_1,G_2) \sim \rho\text{-}\ER(P)$, if
\begin{itemize}
\item[i.] Marginally, $G_1\sim \ER(P)$ and $G_2\sim \ER(P)$.
\item[ii.] Edges are independent across $G_1$ and $G_2$ except that the indicators of the events $\{u,v\} \in E(G_1)$ and $\{u,v\} \in E(G_2)$ are jointly distributed as a pair of Bernoulli random variables with success probability $P_{u,v}$ and correlation $\rho$.
If the correlation is allowed to vary across edges, so that
these two events have correlation $\rho_{u,v}$, then collecting these correlations in a symmetric matrix
$R = [\rho_{i,j}]_{i,j=1}^n$, we write $(G_1,G_2) \sim \RER(P)$; see \cite{unmatchable}.
\end{itemize}
\end{definition}
\noindent Ranging the values in $R$ from $0$ to $1$ allows for the consideration of graphs that range from independent ($R=0$) to isomorphic ($R=1$).
Intermediate values of $R$ allow for the encoding of a correspondence across networks between these two extremes.
We will also consider $R<0$, in which case edges across networks are anti-correlated.
This is particularly useful for modeling situations in which corresponding vertices stochastically behave differently across networks.


\section{Vertex Nomination}
\label{sec:VN}
Loosely stated, the vertex nomination problem we consider in this paper can be summarized as follows:  Given a vertex of interest $v^*$ in a graph $G_1=(V_1,E_1)$,
find the corresponding vertex of interest $u^*$ (if it exists) in a second graph $G_2=(V_2,E_2)$
by ranking the vertices of $G_2$ according to our confidence that they correspond to $v^*$ in graph $G_1$.
To formally define this version of vertex nomination, we will need to consider distributions on graphs with partially-overlapping node sets that have a built-in notion of vertex correspondence across graphs.
To this end, we will consider distributions on $\gn\times\gm$, where $\gn$ is the set of labeled graphs on $n$ vertices, with vertex labels given by $\{v_1,v_2,\ldots,v_n\}$, and $\gm$ is the set of labeled graphs on $m$ vertices, with vertex labels given by $\{u_1,u_2,\ldots,u_m\}$.
Note that for $i\in[n]\cap[m]$, $v_i$ and $u_i$ are merely vertex labels, and it is not necessarily the case that $v_i=u_i$.
We follow this labeling convention in order to emphasize the reality that the vertex sets of $G_1$ and $G_2$ may only partially overlap.

\begin{definition}[Nominatable Distributions]
\label{def:nd}
We define the family of \emph{Nominatable Distributions}, which we denote $\mathcal{N}$, to be the family of distributions
$$\mathcal{N}=\left\{F_{c,n,m,\theta}\text{ s.t. }n,m\in\mathbb{Z}^+,\,\theta\in\Theta,\right\},$$ where
$F_{c,n,m,\theta}$ is a distribution on $\calGn\times\gm$ parameterized by $\theta \in \Theta$ and satisfying:
\begin{itemize}
\item[i.] The vertex sets $V_1=\{v_1,v_2,\ldots,v_n\}$ and $V_2=\{u_1,u_2,\ldots,u_m\}$ satisfy $v_i=u_i$ for $0<i\leq c$.
We refer to $C=\{v_1,v_2,\ldots,v_c\}=\{u_1,u_2,\ldots,u_c\}$ as the {\em core vertices}. These are the vertices that are shared across the two graphs and imbue the model with a natural notion of corresponding vertices.
\item[ii.] Vertices in $J_1=V_1\setminus C$ and $J_2=V_2\setminus C$, satisfy $J_1 \cap J_2=\emptyset$.
We refer to $J_1$ and $J_2$ as \emph{junk} vertices. These are the vertices in each graph that have no corresponding vertex in the other graph.
\item[iv.] The induced subgraphs $G_1[J_1]$ and $G_2[J_2]$ are conditionally independent given $\theta$.
\end{itemize}
\end{definition}

A few examples will serve to illustrate this definition. We will return to the three example settings below several times throughout the rest of the paper in order to highlight and illustrate phenomena of interest.

\begin{example}[$\RER(P)$] \label{ex:RERP} \emph{
Let $(G_1,G_2)\sim \RER(P)$ with $P,R\in\R^{n\times n}$ and $R > 0$ entrywise,
so that $G_1$ and $G_2$ have correlated edges as described in Section \ref{sec:corr}.
In this example, the model parameter is $\theta=(P,R)$,
and the vertex sets of the two graphs can be thought of as fully overlapping, i.e., $V_1=V_2=C=[n]$ and $J_1=J_2=\emptyset$,
since the correlation structure conveyed in the entries of $R$ encodes an explicit correspondence
between the edges of $G_1$ and the edges of $G_2$ (and hence also a correspondence between
$V_1$ and $V_2$).
Note that if we consider $C=[k]$ with $k<n$, then we would require (after suitably ordering the vertices) $R_{u,v}=0$ for $u,v>k$.
This highlights the way in which $\theta$ (and hence the distribution $F_{c,n,m,\theta}$)
can vary with $c$, and vice-versa.
}\end{example}

\begin{example}[RDPG] \label{ex:RDPG} \emph{
Let $m > n$ and suppose that $Y \in \R^{m \times d}$ has distinct rows
and satisfies $(Y Y^T)_{i,j} \in [0,1]$ for all $i,j \in [m]$.
Let $X \in \R^{n \times d}$ be a submatrix of $Y$,
and consider $G_1 \sim \RDPG(X)$ and $G_2\sim \RDPG(Y)$,
where $G_1$ and $G_2$ are conditionally independent given $Y$.
In this example, we can consider $\theta=Y$, $V_1=[n]=C$, $J_1=\emptyset$, $V_2=[n]\cup J_2$, and $J_2=\{u_{n+1},u_{n+2},\ldots,u_m\}$.
Note that as $G_1$ and $G_2$ are conditionally independent given $\theta$, we could also consider $0<c<n$ here as well. This illustrates that $\theta$ need not necessarily vary with $c$, and hence $F_{c,n,m,\theta}$ need not vary with $c$, either.
}\end{example}

\begin{example}[Independent \ErdosRenyi graphs]\label{ex:indERgraphs}\emph{
Let $(G_1,G_2)$ be independent $\ER(n,p)$ random graphs.
In this example, we can consider any $c\in [n]$.
Note that if $c=0$ here, then there is no corresponding vertex of interest in $G_2$, and this example serves as a natural boundary case between models in which nomination is possible and those
in which it is not.
As we will see below in Theorem~\ref{thm:kconsis},
$c>0$ may still yield chance performance for any nomination scheme,
and the existence of a vertex correspondence does not necessarily imply any performance guarantees.
}\end{example}

\begin{remark}
\emph{
In addition to the edge-independent and conditionally edge-independent network models considered above, the class of nominatable distributions
contains a host of other popular random graph models, including the Exponential Random Graph Model \cite{ERGM,ERGM2,ERGM3}, the preferential attachment model \cite{PA}, and the Watts-Strogatz small world model \cite{SW_WS}, among others.
Indeed, if we consider the case where $c=n=m$, then any parametric distribution on $\mathcal{G}_n\times\mathcal{G}_n$ is a nominatable distribution.
}
\end{remark}

\begin{remark}
\emph{
The core vertices $C$ in a nominatable distribution correspond to the vertices that can be sensibly identified across graphs. Note that this set does not require any further structure, aside from the conditional independence of $G_1[J_1]$ and $G_2[J_2]$ given the parameter $\theta$. Thus, we are largely free to specify any notion of correspondence we please. Depending on the application, this correspondence may be that of vertices playing similar structural roles, belonging to the same community, or some more complicated application-specific notion of correspondence. That is, the notion of cross-graph correspondence, and hence the notion of vertex similarity, is largely left to the practitioner to specify when she or he specifies an appropriate random graph model.
}
\end{remark}

Given a pair of graphs $(G_1,G_2) \sim F_{c,n,m,\theta}\in\mathcal{N}$, if the vertices in $C$ are known across graphs then identifying the corresponding vertex to $v^* \in C$ is immediate from the vertex labels.
In practice, this information is unknown and the correspondences across graphs are only partially observed or even unobserved entirely.
To model this added uncertainty, we consider passing the vertex labels of $G_2$ through an obfuscating function.
\begin{definition}
Let $(G_1,G_2) \sim F_{c,n,m,\theta}\in\mathcal{N}$.
An \emph{obfuscating function} $\mo:V_2\rightarrow W$ is a bijection from $V_2$ to $W$ with $W\cap V_i=\emptyset$ for $i=1,2$.
We call a set $W$ satisfying $W\cap V_i=\emptyset$ for $i=1,2$ an obfuscating set, and for a given obfuscating set $W$, we let $\mathfrak{O}_W$ be the set of all obfuscating functions $\mo:V_2\rightarrow W$.
\end{definition}
\noindent Here, $\mo$ models the practical reality that the correspondence of labels across graph is unknown a priori.
Note that to ease notation, we shall write $\mo(G_2)$ (resp., $\mo(g_2)$ and $\mo(\gm)$) to denote the graph $G_2$ (respectively, $g_2$ and $\gm$) whose labels have been obfuscated via $\mo$.

\begin{figure}[t!]  
  \centering
  \includegraphics[width=0.6\columnwidth]{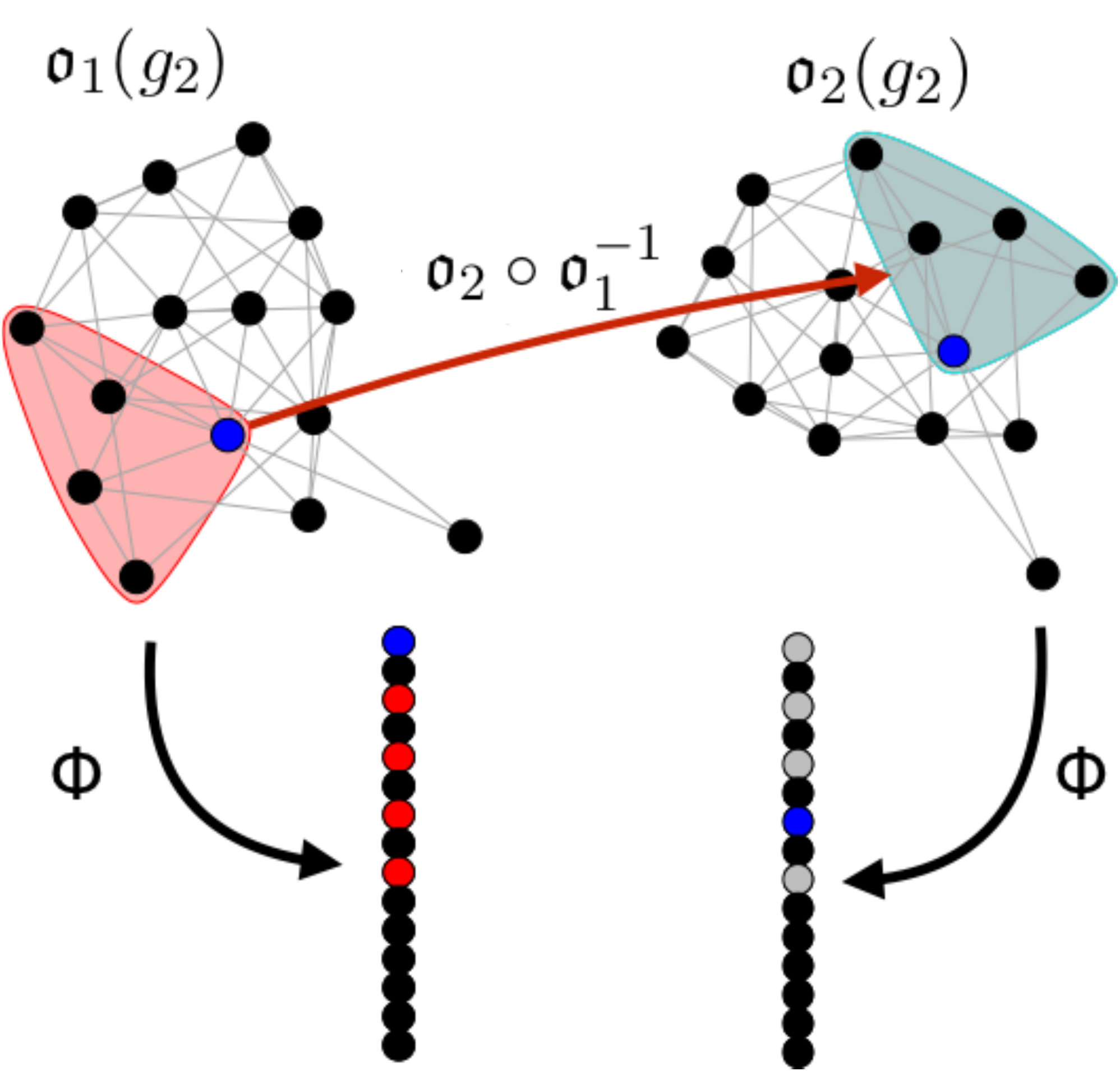}
  \caption{An illustration of the ``label-independence'' property of VN schemes.
  If the blue vertex in $\mo_1(g_2)$ (resp., $\mo_2(g_2)$) is $\mo_1(u)$ (resp., $\mo_2(u)$) for $u\in V_2$, then we require the ranks of $\mathcal{I}(\mo_1(u);\mo_1(g_2))$ (outlined in red in the network $\mo_1(g_2)$ and colored red/blue in the ordering provided by $\Phi$) to be equal to the ranks of $\mathcal{I}(\mo_2(u);\mo_2(g_2))$ (outlined in grey in the network $\mo_2(g_2)$ and colored grey/blue in the ordering provided by $\Phi$).  
Indeed, the set of ranks of $\mo(\mathcal{I}(u;g_2))$ via $\Phi$ is independent of the choice of obfuscation function $\mo$.
  }
\label{fig:consis2}
\end{figure}

Before defining a VN scheme, we must make one additional definition: for a graph $g\in\gm$ and $u\in V(g)$, define
$$
\mathcal{I}(u;g)=\{w\in V(g)\text{ s.t. } 
	\exists\text{ an automorphism }\sigma\text{ of }g,\text{ s.t. }\,\sigma(u)=w\}.
$$
Note that by taking $\sigma$ to be the identity, we have $u\in \mathcal{I}(u;g)$.
The vertices in $\mathcal{I}(u;g)$ are those that are, in a sense, topologically equivalent to the vertex $u$ in $g$, and hence, in the absence of labels, indistinguishable from one another.
As such,
any sensibly-defined vertex nomination scheme should view all vertices in $\mathcal{I}(u;g)$ as being equally good matches to a vertex of interest $v^*$.
Thus, a well-defined VN scheme should be ``label-independent'' in the following sense:
The set of ranks of each set of equivalent vertices (i.e., each $\mathcal{I}(u;g_2)$) needs to be invariant to the particular choice of obfuscating function; see Figure \ref{fig:consis2} for an illustration of this consistency criterion.
Formally, we have the following.

\begin{definition}[Vertex Nomination (VN) Scheme]\label{def:VN}
For a set $A$, let $\calT_A$ denote the set of all total orderings of the elements of $A$.
For $n,m>0$ fixed and obfuscating set $W$, a \emph{vertex nomination scheme} is a function $\Phi: \gn \times \gm \times \mathfrak{O}_W \times V_1 \rightarrow \calT_{W}$ satisfying the following consistency property:
If for each $u\in V_2$, we define $\text{rank}_{\Phi(g_1,\mo(g_2),v^*)}\big(\mo(u)\big)$ to be the position of $\mo(u)$ in the total ordering provided by $\Phi(g_1,\mo(g_2),v^*)$, and we define
$\mathfrak{r}_{\Phi}:\gn\times\gm\times\mathfrak{O}_W\times V_1\times2^{V_2}\mapsto 2^{[m]}$ via
$$\mathfrak{r}_{\Phi}(g_1,g_2,\mo,v^*,S)=\{\text{rank}_{\Phi(g_1,\mo(g_2),v^*)}\big(\mo(u)\big)\text{ s.t. }u\in S \},$$
then we require that for any $g_1\in\gn,$ $g_2\in\gm$, $v^*\in V_1$, obfuscating functions $\mo_1,\mo_2\in\mathfrak{O}_W$ and any $u\in V(g_2)$, 
\begin{align}
\label{eq:consis}
&\mathfrak{r}_{\Phi}\big(g_1,g_2,\mo_1,v^*,\mathcal{I}(u;g_2)\big)
=\mathfrak{r}_{\Phi}\big(g_1,g_2,\mo_2,v^*,\mathcal{I}(u;g_2)\big)\\
\notag &\Leftrightarrow \mo_2\circ\mo_1^{-1}\Big[ \mathcal{I}\Big(\Phi(g_1,\mo_1(g_2),v^*)[k]);\mo_1(g_2)\Big)\Big]=
\mathcal{I}\Big( \Phi(g_1,\mo_2(g_2),v^*)[k];\mo_2(g_2) \Big),\text{ for all }k\in[m],
\end{align}
where $\Phi(g_1,\mo(g_2),v^*)[k]$ denotes the $k$-th element (i.e., the rank-$k$ vertex) in the ordering $\Phi(g_1,\mo(g_2),v^*)$.
We let $\calVnm$ denote the set of all such VN schemes.
\end{definition}

\noindent Given $(G_1,G_2)\sim F_{c,n,m,\theta}\in\mathcal{N}$ realized as $G_1=g_1$ and $G_2=g_2$ with $v^*\in V_1$ the vertex of interest in $G_1$, a VN scheme $\Phi(\cdot,\cdot,\cdot)$ produces a ranked list $\Phi(g_1,\mathfrak{o}(g_2),v^*)$ of the vertices of $\mathfrak{o}(g_2)$ (i.e., the set $W$), ordered according to how likely each vertex in $V(\mo(g_2))$ is judged to correspond to $v^*$,
with optimal performance corresponding to
$$\Phi(g_1,\mo(g_2),v^*)[1]=\begin{cases}\mo(v^*)&\text{ if }v^*\in C\\
\text{arbitrary }v\in W&\text{ if }v^*\notin C.
\end{cases}$$
Less formally, one can think of a VN scheme as ranking the vertices of $G_2$ according to
how well they resemble the vertex of interest $v^*$
under some task-dependent measure.

\begin{remark}
\emph{
Note that if $u\in V_2$ is such that
$\mathcal{I}(u;g_2)= \{u\}$ (i.e., $u$ is topologically distinct within $g_2$), then Equation \eqref{eq:consis} implies that
$$\text{rank}_{\Phi(g_1,\mo_1(g_2),v^*)}\big(\mo_1(u)\big)=\text{rank}_{\Phi(g_1,\mo_2(g_2),v^*)}\big(\mo_2(u)\big)$$
for any $\mo_1$, $\mo_2$ in $\mathfrak{O}_W$.
If $\mathcal{I}(u;g_2)$ contains vertices in addition to $u$, then Equation \eqref{eq:consis} implies that the set of vertices topologically equivalent to
$u$ (namely, those in $\mathcal{I}(u;g_2)$) must achieve the same ranks via $\Phi$ under any two obfuscating functions; see Figure \ref{fig:consis} for a simple example of this consistency criterion in action.
}
\begin{figure}[t!]  
  \centering
  \includegraphics[width=\columnwidth]{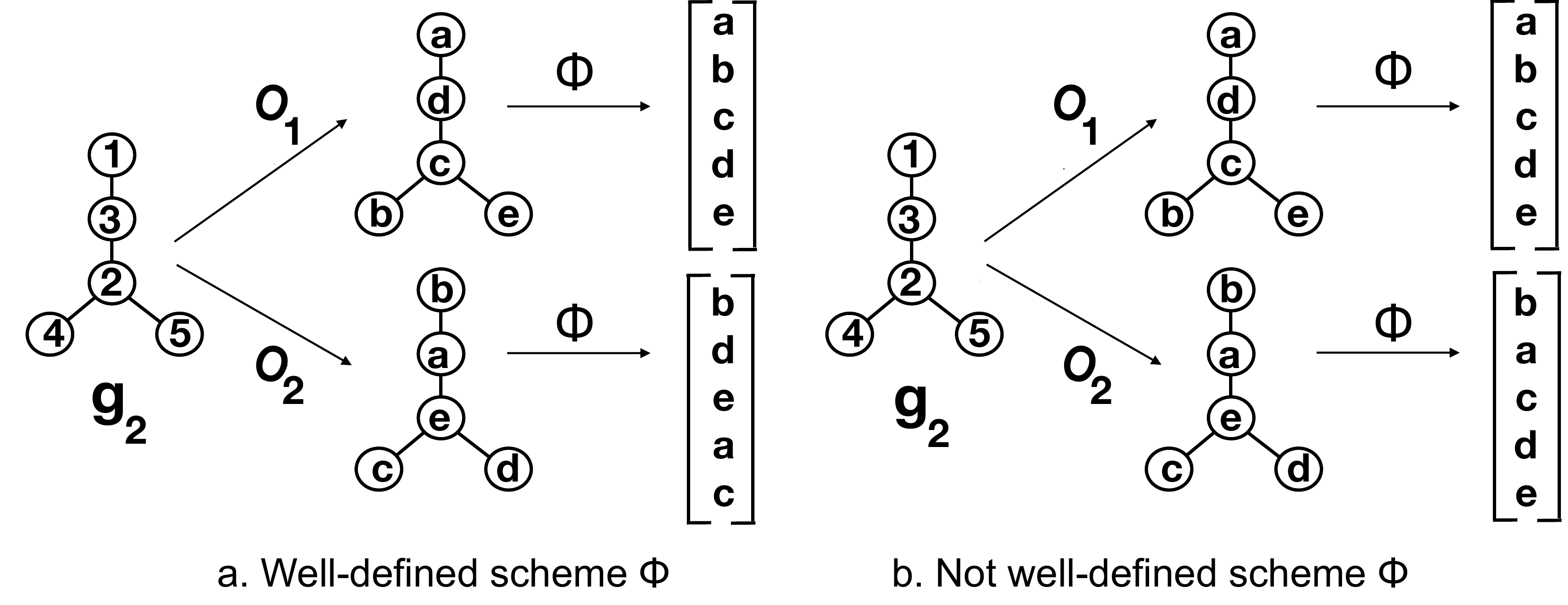}
  \caption{An example of the consistency criterion, Equation \eqref{eq:consis}, in action.  
  The left panel (a) shows a well-defined nomination scheme while the right panel (b) shows an ill-defined scheme.  
  The key in this example is that any scheme satisfying Equation \eqref{eq:consis} must have 
  $\mathfrak{r}_{\Phi}(g_1,g_2,\mo_1,v^*,\{1\})=\mathfrak{r}_{\Phi}(g_1,g_2,\mo_2,v^*,\{1\})$; $\mathfrak{r}_{\Phi}(g_1,g_2,\mo_1,v^*,\{3\})=\mathfrak{r}_{\Phi}(g_1,g_2,\mo_2,v^*,\{3\})$; $\mathfrak{r}_{\Phi}(g_1,g_2,\mo_1,v^*,\{2\})=\mathfrak{r}_{\Phi}(g_1,g_2,\mo_2,v^*,\{2\})$; and 
  $\mathfrak{r}_{\Phi}(g_1,g_2,\mo_1,v^*,\{4,5\})=\mathfrak{r}_{\Phi}(g_1,g_2,\mo_2,v^*,\{4,5\})$.
  }
\label{fig:consis}
\end{figure}
\end{remark}

\begin{remark}[Relation to \cite{FisLyzPaoChePri2015,lyzinski2016consistency}]
\label{rem:one-graph}
\emph{Recall \\the one-graph vertex nomination task considered in earlier
works \cite{Coppersmith2014,FisLyzPaoChePri2015,lyzinski2016consistency}
and described in Section~\ref{sec:intro},
in which vertices are considered interesting precisely when they belong to one of $K$ communities in a stochastic block model.
While the two-graph VN formulation we consider in the present work (modulo symmetries) involves a single vertex of interest across graphs, the framework is easily extended to the setting where one may have multiple vertices of interest (and not of interest), and in particular it can encode instances of the one-graph version VN problem.
To see this, consider an instance of the single-graph VN problem on graph
$G = (V, E)$ where $V$ is partitioned into $K$ communities as
$V = V_1 \cup V_2 \cup \dots \cup V_K$
and each of the communities is comprised of labeled (i.e., seed vertices, whose community memberships are observed)
and unlabeled (i.e., nonseed, whose community memberships are unobserved) vertices,
$V_k = S_k \cup U_k$, where $S_k \subseteq V_k$ is the set of seeds from
the $k$-th block and $U_k \subseteq V_k$ is the set of nonseed vertices.
We can encode this one-graph VN instance as an instance of the
two-graph problem by encoding additional information in the graph $G_1$.
Construct a vertex set  $V' = V \cup \{\ell_1,\ell_2,\dots,\ell_K\}$.
The $K$ new vertices $\{ \ell_k \}_{k=1}^K$ will encode the label
information present in the graph $G$.
Let $E' = E \cup L$, where $L = \{ \{\ell_k, s\} : s \in S_k, k \in [K] \}$,
so that edges connect from seed vertices in $S=S_1\cup S_2 \cup \dots \cup S_K$
to their corresponding label vertices.
Take $G_1 = (V',E')$, and let the interesting vertices
(and possible uninteresting vertices) be given by the elements of
$S \subseteq V'$.
The second graph $G_2$ is then
the subgraph of $G$ induced by the unlabeled vertices $U \subseteq V$ passed through an appropriate obfuscating function.
This pair $(G_1,G_2)$, with any $s \in S_1$ chosen to be the interesting
vertex, encodes the label information present in the one-graph VN problem
as well as the graph structure of $G$, as required.}
\end{remark}

\section{Bayes error and Bayes optimality in Vertex Nomination}
\label{sec:VNBEBO}

Viewing a VN scheme as an information retrieval system suggests
that a scheme that puts $\mo(v^*)$ close to the top of the nomination list is potentially of great practical value, even if it fails to obtain perfect performance.
Motivated by this, we adapt the recall-at-$k$ metric from classical information retrieval as a measure of performance.
To wit, we define the level-$k$ loss function and error for VN as follows.
\begin{definition}[VN loss function, level-$k$ error]
\label{def:lossfcn}
Let $\Phi\in\calVnm$ be a vertex nomination scheme and $\mo$ an obfuscating function.  
For $(g_1,g_2)$ realized from $(G_1,G_2)\sim F_{c,n,m,\theta}$ with vertex of interest $v^*\in C$, and for $k\in[m-1]$, we define the \emph{level-$k$ nomination loss} via
\begin{equation} \label{eq:lossfcn}
\begin{aligned}
\ell_k(\Phi,g_1,g_2,v^*)&=\mathds{1}\{ \text{rank}_{\Phi(g_1,\mo(g_2),v^*)}(\mo(v^*)) \ge k+1\} \\
&=1-\mathds{1}\{ \text{rank}_{\Phi(g_1,\mo(g_2),v^*)}(\mo(v^*)) \leq k\}.
\end{aligned} \end{equation}
The \emph{level-$k$ error} of $\Phi$ at $v^*$ is then defined to be 
\begin{equation} \label{eq:vnloss}
\begin{aligned}
L_k( \Phi, v^* ) &=\mathbb{E}_{(G_1,G_2)\sim F_{c,n,m,\theta}}\left[ \ell_k(\Phi,G_1,G_2,v^*) \right] \\
&=\mathbb{P}_{(G_1,G_2)\sim F_{c,n,m,\theta}}\left[ \text{rank}_{\Phi(G_1,\mo(G_2),v^*)}(\mo(v^*)) \ge k+1 \right].
\end{aligned} \end{equation}
\end{definition}
\noindent From the definition of the level-$k$ error in Eq. (\ref{eq:vnloss}), it is immediate that
\begin{equation} \label{eq:Lchain}
\begin{aligned}
L_1( \Phi, v^* )&=1-\mathbb{P}_{(G_1,G_2)\sim F_{c,n,m,\theta}}\left[ \text{rank}_{\Phi(G_1,\mo(G_2),v^*)}(\mo(v^*))=1\right]\\
 &\geq L_2( \Phi, v^* )=1-\mathbb{P}_{(G_1,G_2)\sim F_{c,n,m,\theta}}\left[ \text{rank}_{\Phi(G_1,\mo(G_2),v^*)}(\mo(v^*))\in\{1,2\}\right] \\
 &\geq L_3( \Phi, v^* )=1-\mathbb{P}_{(G_1,G_2)\sim F_{c,n,m,\theta}}\left[ \text{rank}_{\Phi(G_1,\mo(G_2),v^*)}(\mo(v^*))\in\{1,2,3\}\right]\\
 &\hspace{10mm}\vdots \\
 &\geq L_{m-1}( \Phi, v^* ) =1-\mathbb{P}_{(G_1,G_2)\sim F_{c,n,m,\theta}}\left[ \text{rank}_{\Phi(G_1,\mo(G_2),v^*)}(\mo(v^*))\in[m-1]\right],
\end{aligned} \end{equation}
The level-1 loss function is analogous to the classical 0/1 loss function in classification, as $L_1( \Phi, v^* )$ is simply the probability that $\Phi$ fails to ``classify'' $\mo(v^*)$ as the vertex corresponding to $v^*$ in $\mo(G_2)$
(i.e., fails to rank it first).
Considering $1<k\ll m$ 
enables us to model the practical loss associated with using a VN scheme to search for $\mo(v^*)$ in $\mo(V_2)$ given limited resources.

\begin{remark}
\emph{Unlike in the classification setting described in Section~\ref{sec:BE}---where $L_F(h_n)$ is a random variable indexed by $n$---the nomination errors 
defined in Definition \ref{def:lossfcn} are constants indexed by $n$ and $m$.
In the classical setting, $L_F(h_n)$ denotes the error rate of a classifier
that classifies a single observation $X$ based on
$n$ training instances $\{(X_i,Y_i)\}_{i=1}^n$.
In the case of VN, the notion of labeled training instances is, at best, more hazy.
Indeed, in the present setting, the training data and test data are
inseparable. The graphs (or, more specifically, their edges)
\emph{are} the training data, and in the present work,
the graph orders $n,m$ are better thought of as measuring problem dimension
rather than training set size.}
\end{remark}

Analogous to the classification literature, we are now able to define the concept of Bayes optimality in the VN framework.

\begin{definition}[Bayes error of a VN scheme]
\label{def:levelkerror}
Let $(G_1,G_2)\sim F_{c,n,m,\theta}$ with vertex of interest $\vstar \in C$, and let $\mo\in\mathfrak{O}_W$ be an obfuscating function.
For $k\in[m-1]$, we define the \emph{level-$k$ Bayes optimal} VN scheme to be any element
$\Psi \in \arg \min_{\Phi \in \calVnm} L_k( \Phi, \vstar ), $
and define the \emph{level-$k$ Bayes error}
to be $\Lstar_k(\vstar) = L_k( \Psi, \vstar )$ for level-$k$ Bayes optimal $\Psi$.
\end{definition}

Now that we have a notion of Bayes error for VN, it is natural to ask whether
an optimal VN scheme exists analogous to the Bayes optimal classifier of Equation~\eqref{eq:classbe}.
Toward this end, let $(g_1,g_2)$ be realized from $(G_1,G_2)\sim F_{c,n,m,\theta}\in\mathcal{N}$, and consider a vertex of interest $v^*\in C$
and obfuscating function $\mo:V_2\rightarrow W$.
In order to avoid the technical complexities associated with graph automorphisms, in what follows we will assume that $F_{c,n,m,\theta}\in\mathcal{N}$ is supported on $\gn^a\times\gm^a$, where $\gn^a$ (resp., $\gm^a$) is the set of asymmetric graphs in $\gn$ (resp., $\gm$).
For analogous results in networks with symmetries, see Remark \ref{rem:symm}.

Letting $\simeq$ denote graph isomorphism, define the set
\begin{equation} \label{eq:recovered}
\begin{aligned}
(g_1,[\mo(g_2)])
&= \left\{\big(g_1, \tilde g_2\big)\in\gn\times\gm\text{ s.t. }\mo(\tilde g_2)\simeq\mo(g_2)\right\} \\
&= \left\{\big(g_1, \tilde g_2\big)\in\gn\times\gm\text{ s.t. }\tilde g_2\simeq g_2\right\}.
\end{aligned}
\end{equation}
In order to define the Bayes optimal scheme, we will also need the following restriction of $(g_1,[\mo(g_2)])$: for each $w\in W,$ we define
\begin{equation} \label{eq:reveal}
\begin{aligned}
(g_1,&[\mo(g_2)])_{w=\mo(v^*)}\\
&=\Big\{\big(g_1, \tilde g_2\big)\in\gn\times\gm\text{ s.t. }\exists\text{ isomorphism }\sigma\text{ s.t. }\mo(\tilde g_2)=\sigma(\mo(g_2)), \sigma(w)=\mo(v^*)\Big\}.
\end{aligned}
\end{equation}
We are now ready to define a Bayes optimal VN scheme.

For ease of notation, in the sequel we will write $\PFt$ or even simply $\p$
in place of $\PFtau$ where there is no risk of ambiguity.
Let 
\begin{equation}
\label{eq:g}
\mathbf{g}=\left\{\left(g_1^{(i)},g_2^{(i)}\right)\right\}_{i=1}^k
\end{equation} 
be such that the sets 
$$\left\{\left(g_1^{(i)},[\mo(g_2^{(i)})]\right)\right\}_{i=1}^k
$$
 partition $\gn^a\times\gm^a$.
We will call this partition $\mathcal{P}_{n,m}$, where we suppress dependence on
$\mathbf{g}$ and $\mo$ for ease of notation.
We will define a Bayes optimal scheme $\BayesVN$ (independent of the choice of $\bf g$) piecewise on each element of this partition, and we will prove in Theorem \ref{lem:crossgraph:optimal} that $\BayesVN$ is level-$k$ Bayes optimal for all $k \in [m-1]$:
\begin{equation} \label{eq:bo} 
\begin{aligned}
&\BayesVN \left(g_1^{(i)},\mo(g_2^{(i)}),v^*\right)[1] \in \underset{u \in W}{\am}\,\,
     \p \Big[ \left(g_1^{(i)},[\mo(g_2^{(i)})]\right)_{u=\mo(v^*)} \,\Big|\, \left(g_1^{(i)},[\mo(g_2^{(i)})]\right) \Big] \\
&\BayesVN\left(g_1^{(i)},\mo(g_2^{(i)}),v^*\right)[2] 
\in \underset{u \in W\setminus \{ \BayesVN[1] \} }{\am}
	\p \Big[ \left(g_1^{(i)},[\mo(g_2^{(i)})]\right)_{u=\mo(v^*)} \,\Big|\, \left(g_1^{(i)},[\mo(g_2^{(i)})]\right)\Big] \\
&\hspace{40mm}\vdots \\
&\BayesVN\left(g_1^{(i)},\mo(g_2^{(i)}),v^*\right)[m] 
\in \underset{u \in W\setminus\{\cup_{i\in[m-1]}\Phi^*[i]\} }{\am}
                        \p \Big[ \left(g_1^{(i)},[\mo(g_2^{(i)})]\right)_{u=\mo(v^*)} \,\Big|\, \left(g_1^{(i)},[\mo(g_2^{(i)})]\right) \Big],
\end{aligned}
\end{equation}
with ties broken arbitrarily but deterministically.
We refer the interested reader to Appendix \ref{sec:VNties} for discussion of the case where ties are allowed in the ranking function.
For each element 
$$(g_1,g_2)\in\left(g_1^{(i)},[\mo(g_2^{(i)})]\right)\setminus\left\{\left(g_1^{(i)},g_2^{(i)}\right)\right\},$$
choose the permutation $\sigma$ such that $\mo(g_2)=\sigma(\mo(g_2^{(i)}))$, and define
$$\BayesVN(g_1^{(i)},\mo(g_2),v^*)=\sigma(\BayesVN(g_1^{(i)},\mo(g_2^{(i)}),v^*)).$$

Lastly, the following theorem shows that this scheme (uniquely defined up to tie-breaking) is indeed Bayes optimal.
A proof can be found in Appendix \ref{sec:apx:pfoptimal}.

\begin{theorem} 
\label{lem:crossgraph:optimal}
Let $\mathfrak{o}\in\mathfrak{O}_W$ be an obfuscating function, and let
$$\mathbf{g}=\left\{\left(g_1^{(i)},g_2^{(i)}\right)\right\}_{i=1}^k$$ be such that the sets 
$$\left\{\left(g_1^{(i)},[\mo(g_2^{(i)})]\right)\right\}_{i=1}^k$$
 partition $\gn^a\times\gm^a$.
Let $\BayesVN=\BayesVN_\mathbf{g}$ be as defined in Equation~\eqref{eq:bo}.
Suppose that $(G_1,G_2)\sim F_{c,n,m,\theta}\in\mathcal{N}$ with $F_{c,n,m,\theta}$ supported on $\gn^a\times\gm^a$, and consider a vertex of interest $v^*\in C$.
We have that $L_k(\BayesVN_{\bf g},v^*)=\Lstar_k(\vstar)$ for all $k\in[m-1]$, partitions $\bf g$, and all obfuscating functions $\mathfrak{o}$.
\end{theorem}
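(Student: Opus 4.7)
The plan is to first verify that $\BayesVN \in \calVnm$, and then decompose the level-$k$ error over the partition $\mathcal{P}_{n,m}$ and argue block-by-block optimality via a greedy top-$k$ ranking. Since $F_{c,n,m,\theta}$ is supported on $\gn^a \times \gm^a$, every $g_2$ is asymmetric, so $\mathcal{I}(u;g_2) = \{u\}$ for every $u \in V_2$, and the consistency criterion \eqref{eq:consis} collapses to the equivariance statement that $\text{rank}_{\Phi(g_1,\mo_1(g_2),v^*)}(\mo_1(u)) = \text{rank}_{\Phi(g_1,\mo_2(g_2),v^*)}(\mo_2(u))$ for every $u \in V_2$ and all $\mo_1,\mo_2 \in \mathfrak{O}_W$. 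The extension rule immediately following \eqref{eq:bo}, together with a natural equivariant extension across obfuscating functions, makes $\BayesVN$ satisfy this by construction.

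Next, I would show that equivariance pins down any $\Phi \in \calVnm$ on each block $C_i := (g_1^{(i)},[\mo(g_2^{(i)})])$ by its value on the representative $(g_1^{(i)},g_2^{(i)})$. For $(g_1^{(i)},\tilde g_2) \in C_i$, with the (unique, by asymmetry) isomorphism $\sigma_{\tilde g_2}$ satisfying $\mo(\tilde g_2) = \sigma_{\tilde g_2}(\mo(g_2^{(i)}))$, setting $\mo' := \sigma_{\tilde g_2} \circ \mo$ gives $\mo'(g_2^{(i)}) = \mo(\tilde g_2)$, and equivariance applied to the pair $(\mo,\mo')$ at $g_2^{(i)}$ forces $\Phi(g_1^{(i)},\mo(\tilde g_2),v^*)[j] = \sigma_{\tilde g_2}\!\left(\Phi(g_1^{(i)},\mo(g_2^{(i)}),v^*)[j]\right)$ for every $j$. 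Consequently, on the realization $(G_1,G_2) = (g_1^{(i)},\tilde g_2)$, $\mo(v^*)$ lies in the top $k$ of $\Phi$ iff $\sigma_{\tilde g_2}^{-1}(\mo(v^*)) = \Phi(g_1^{(i)},\mo(g_2^{(i)}),v^*)[j]$ for some $j \le k$, which by \eqref{eq:reveal} is exactly the event $(G_1,G_2) \in (g_1^{(i)},[\mo(g_2^{(i)})])_{\Phi[j]=\mo(v^*)}$.

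Decomposing $1 - L_k(\Phi,v^*)$ over the partition and summing these disjoint events then gives
\begin{equation*}
1 - L_k(\Phi,v^*) = \sum_i \PFt\big[C_i\big]\sum_{j=1}^{k} \PFt\big[(g_1^{(i)},[\mo(g_2^{(i)})])_{\Phi[j]=\mo(v^*)} \,\big|\, C_i\big],
\end{equation*}
where the $k$ distinct values $\Phi[j] := \Phi(g_1^{(i)},\mo(g_2^{(i)}),v^*)[j] \in W$ are freely chosen in each block. Because only the set $\{\Phi[j]\}_{j=1}^k$ enters the inner sum, the maximum is attained by selecting the $k$ elements $u \in W$ with the largest posteriors $\PFt[(g_1^{(i)},[\mo(g_2^{(i)})])_{u=\mo(v^*)} \mid C_i]$; ordering those in decreasing order of posterior---exactly as in \eqref{eq:bo}---produces a \emph{single} scheme that is level-$k$ Bayes optimal for every $k \in [m-1]$ simultaneously. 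Independence from the partition $\mathbf{g}$ and from the obfuscating function $\mo$ follows because altering either one merely relabels $W$ by an induced bijection, leaving the ranked posteriors (and hence the scheme on the entire block) unchanged.

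The main obstacle is the reduction step: one must translate the consistency criterion, which is phrased at fixed $g_2$ with varying $\mo$, into an equivariance statement at fixed $\mo$ with varying $g_2$ across an isomorphism class, and then identify the resulting top-$k$ event with the posterior set $(g_1^{(i)},[\mo(g_2^{(i)})])_{u=\mo(v^*)}$ featured in \eqref{eq:bo}. Once this bridge is built, block-by-block greedy top-$k$ maximization completes the argument.
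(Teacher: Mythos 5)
Your proposal is correct and follows essentially the same route as the paper's proof: decompose the error over the partition $\mathcal{P}_{n,m}$, use the label-independence criterion to identify the event $\{\text{rank}_{\Phi}(\mo(v^*))=j\}$ on each block with the posterior set $(g_1^{(i)},[\mo(g_2^{(i)})])_{\Phi[j]=\mo(v^*)}$, and then maximize the top-$k$ conditional mass block by block (the paper phrases your greedy step as a majorization of the probability vectors $p_{i,\Phi}$ by $p_{i,\Phi^*}$). Your treatment of the equivariance bridge via $\mo'=\sigma_{\tilde g_2}\circ\mo$ is in fact spelled out more explicitly than in the paper, which simply asserts the corresponding set identity.
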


\begin{remark}
\label{rem:symm}
\emph{
The effect of symmetries on Theorem \ref{thm:kconsis} is both subtle and cumbersome, as the specific tie-breaking procedures used in the analogue of Eq.~\eqref{eq:bo} is of great import.
To this end, consider $\bf g$ to be defined as above, and let $T\in\mathcal{T}_W$ be the ordering that specifies the (fixed but otherwise arbitrary) scheme by
which elements within each $\mathcal{I}(v;\mo(g_2))$ are ordered. 
Informally, we will first rank the sets $\mathcal{I}(v;\mo(g_2))$ (rather than the individual vertices), and then use $T$ to rank within and across each of the $\mathcal{I}(v;\mo(g_2))$. Full detail is provided below.}

\emph{For each $w\in W$ and $v\in V(g_2)$, define 
\begin{equation} \label{eq:reveal2}
\begin{aligned}
&(g_1,[\mo(g_2)])_{\mathcal{I}(w;\mo(g_2))=\mo(v)}
=\Big\{\big(g_1, \tilde g_2\big)\in\gn\times\gm\text{ s.t. }\exists\text{ iso. }\sigma\text{ with }\mo(\tilde g_2)=\sigma(\mo(g_2)),\\
&\hspace{55mm}\text{ and }\sigma(u)=\mo(v)\text{ for some }u\in\mathcal{I}(w;\mo(g_2))\Big\}.
\end{aligned}
\end{equation}
As above, we will define the Bayes' optimal VN scheme on each element of the partition provided via $\bf g$.
We first define a ranking $\Psi$ of the sets 
$$\left\{\left(g_1^{(i)},[\mo(g_2^{(i)})]\right)_{\mathcal{I}(u;\mo(g_2^{(i)}))=\mo(v^*)}\right\}_u,$$ and then will use $T$ to give the total ordering from $\Psi$.
To wit, for each $i\in[k]$, define (where ties in the $\am$ are broken in an arbitrary but nonrandom manner), iteratively define
\begin{equation} \label{eq:bos} 
\begin{aligned}
&{\Psi} \!\!\left(g_1^{(i)},\mo(g_2^{(i)}),v^*\right)\![1]\! \in \underset{\mathcal{I}(u;\mo(g_2^{(i)})) \subset  W}{\am}\,\,
\p\! \left[ \left(g_1^{(i)},[\mo(g_2^{(i)})]\right)_{\mathcal{I}(u;\mo(g_2^{(i)}))=\mo(v^*)} \bigg| \left(g_1^{(i)},[\mo(g_2^{(i)})]\right) \right] \\
&{\Psi} \!\!\left(g_1^{(i)},\mo(g_2^{(i)}),v^*\right)\![2]\!
\in \!\!\!\!\!\!\!\!\underset{\mathcal{I}(u;\mo(g_2^{(i)})) \subset W\setminus \{ {\Psi}[1] \} }{\am}
\p\! \left[ \left(g_1^{(i)},[\mo(g_2^{(i)})]\right)_{\mathcal{I}(u;\mo(g_2^{(i)}))=\mo(v^*)} \bigg| \left(g_1^{(i)},[\mo(g_2^{(i)})]\right) \right] \\
&\hspace{50mm}\vdots \\
&{\Psi} \!\!\left(g_1^{(i)},\mo(g_2^{(i)}),v^*\right)\![k]\!
\in \!\!\!\!\!\!\!\!\underset{\mathcal{I}_{\mo\left(g_2\right)}(u) \subset W\setminus\{\cup_{i=1}^{k-1}{\Psi}[i]\} }{\am}
\p\! \left[ \left(g_1^{(i)},[\mo(g_2^{(i)})]\right)_{\mathcal{I}(u;\mo(g_2^{(i)}))=\mo(v^*)} \bigg| \left(g_1^{(i)},[\mo(g_2^{(i)})]\right) \right].
\end{aligned}
\end{equation}
For each element 
$$(g_1,g_2)\in( g^{(i)}_1,[\mo(\tilde g_2^{(i)})])\setminus\{(g_1^{(i)},g_2^{(i)})\},$$ choose an isomorphism $\sigma$ such that $\mo(g_2)=\sigma(\mo(g_2^{(i)}))$, and define
$${\Psi}(g_1,\mo(g_2'),v^*)=\sigma({\Psi}(g_1^{(i)},\mo(g_2^{(i)}),v^*)).$$
Note that the choice of isomorphism $\sigma$ does not impact the definition of ${\Psi}$.}

\emph{For each $(g_1,g_2)\in\gn\times\gm$, we define a VN scheme $\BayesVN_T$ from $\Psi$ as follows:\\
{\bf 1.} Initialize $\BayesVN_T(g_1,\mo(g_2),v^*)$ as an empty list; initialize $j=1$;\\
{\bf 2.} If $\Psi(g_1,\mo(g_2),v^*)[j]$ is nonempty, add the top ranked (according to $T$) element from $\Psi(g_1,\mo(g_2),v^*)[j]$ to the end of $\BayesVN_T(g_1,\mo(g_2),v^*)$, else do nothing; set $j=j+1$ (mod $|\Psi(g_1,\mo(g_2),v^*)|$)\\
{\bf 3.} Repeat Step 2 until there are no more vertices to add to $\BayesVN_T(g_1,\mo(g_2),v^*)$.\\
If $T[1]=\mo(v^*)$, then $\BayesVN_T(g_1,\mo(g_2),v^*)$ is Bayes optimal (as in Theorem \ref{lem:crossgraph:optimal}) in the sense of Definition \ref{def:levelkerror}.
See Appendix \ref{sec:apx:pfoptimal} for details.
}
\end{remark}

\noindent{\bf Example~\ref{ex:RERP}, continued.}
Let $(G_1,G_2)\sim R$-ER($P$) for $P,R \in \mathbb{R}^{n \times n}$.
Under mild model assumptions, we have that $\lim_{n\rightarrow\infty}L^*_{k}(v^*)=0$ for any fixed $k$.
This is due to the fact that the optimal graph matching of $G_1$ to $\mo(G_2)$ will almost surely recover the true vertex labels of $\mo(G_2)$
for $n$ suitably large; i.e.,
$$\text{argmin}_{Q\in\Pi_n}\|AQ-QB\|_F=\{I_n\}\text{ with probability}\rightarrow 1,$$
where $\Pi_n$ is the set of $n\times n$ permutation matrices, $A$ is the adjacency matrix for $G_1$ and $B$ the adjacency matrix for $G_2$.
More concretely, we have the following theorem adapted to our present setting from \cite{unmatchable}.  A proof sketch can be found in Appendix \ref{sec:gmpf}.
\begin{theorem}
\label{thm:stochmatchedcores}
Let $(A,B)\sim R$-ER($P$), and for any fixed permutation matrix $Q$ define the random variable $\delta(Q):= \|AQ-QB\|_F$.
Define $0<\epsilon:=\min_{i,j; i\neq j}2R_{i,j}P_{i,j}(1-P_{i,j})$.
There exist positive constants $c_1,c_2$ such that if $\epsilon^2>c_1 \log(n)$,
then for sufficiently large $n$, 
$$\mathbb{P}\left(\exists\,\,Q\in \Pi_n\setminus\{I_n\}\text{ s.t. } \delta(Q)\leq \delta(I_n)\right)\leq2\exp \left\{ -c_2 \epsilon^2n \right\}.$$
\end{theorem}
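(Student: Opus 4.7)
The plan is to bound the probability of a bad event by a union bound over $Q = Q_\pi \neq I_n$, where for each individual permutation $\pi$ we decompose $\delta(Q)^2 - \delta(I_n)^2$ as a sum of bounded random variables with strictly positive mean and apply a Hoeffding-type concentration inequality. Since $Q$ is orthogonal, $\|AQ-QB\|_F^2 = \|A - QBQ^T\|_F^2$, so
$$ \delta(Q)^2 - \delta(I_n)^2 = \sum_{\{i,j\}} \bigl[ \mathds{1}\{A_{ij} \neq B_{\pi(i),\pi(j)}\} - \mathds{1}\{A_{ij} \neq B_{ij}\} \bigr], $$
and only pairs in $T_\pi := \{\{i,j\}: \{\pi(i),\pi(j)\} \neq \{i,j\}\}$ contribute nonzero summands $Z_e \in \{-1,0,1\}$.

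For each $e = \{i,j\} \in T_\pi$, the $R$-ER structure makes $A_{ij}$ and $B_{ij}$ correlated Bernoullis with $\text{Cov}(A_{ij},B_{ij}) = R_{ij}P_{ij}(1-P_{ij})$, while $A_{ij}$ and $B_{\pi(i),\pi(j)}$ are independent. A direct calculation gives $\E Z_e \geq 2R_{ij}P_{ij}(1-P_{ij}) \geq \epsilon$ in the homogeneous-$P$ case; the inhomogeneous case is handled by summing per-orbit of $\pi$'s induced action on pairs, where telescoping-style contributions from the marginals cancel to leave a lower bound $\epsilon |T_\pi|$. Next, one lower-bounds $|T_\pi|$: if $\pi$ has $k \geq 2$ non-fixed points, every pair joining a non-fixed with a fixed vertex lies in $T_\pi$, so $|T_\pi| \geq k(n-k)$; combined with the internal count $\binom{k}{2}$ minus the number of $\pi$-transpositions, one obtains $|T_\pi| \geq c_* \, k n$ for an absolute constant $c_* > 0$ uniformly over $\pi \neq \mathrm{id}$ and $k$. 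Hence $\E[\delta(Q)^2 - \delta(I_n)^2] \geq \epsilon |T_\pi|$.

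The $Z_e$ are not jointly independent---$Z_e$ and $Z_{\pi(e)}$ share the variable $B_{\pi(e)}$---but the dependency graph on $T_\pi$ is 2-regular (a disjoint union of cycles, one per orbit of $\pi$'s induced action on unordered pairs) and therefore 3-chromatic. I would partition $T_\pi = T_1 \sqcup T_2 \sqcup T_3$ into independent sets in this dependency graph, apply Hoeffding within each $T_j$ (where the summands now involve disjoint underlying Bernoullis), and use the observation that $\sum_{e \in T_\pi} Z_e \leq 0$ forces at least one partial sum $S_j := \sum_{e \in T_j} Z_e$ to deviate from its mean by at least $\epsilon |T_\pi|/3$. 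This yields
$$ \p\bigl(\delta(Q)^2 \leq \delta(I_n)^2\bigr) \leq 6 \exp\bigl(-c_0 \epsilon^2 |T_\pi|\bigr) $$
for an absolute constant $c_0 > 0$.

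Finally, since the number of permutations of $[n]$ with exactly $k$ non-fixed points is at most $\binom{n}{k} k! \leq n^k$, the full union bound reads
$$ \p\bigl(\exists\, Q \neq I_n: \delta(Q) \leq \delta(I_n)\bigr) \leq 6 \sum_{k=2}^{n} n^k \exp\bigl(-c_0 c_* \epsilon^2 k n\bigr), $$
whereupon the hypothesis on $\epsilon$ ensures $\log n \ll \epsilon^2 n$, each term is bounded by $\exp(-c_2 \epsilon^2 k n)$, and the resulting geometric sum (dominated by the $k=2$ term) gives the claimed $2 \exp(-c_2 \epsilon^2 n)$ bound. The main obstacle is the concentration step: disentangling the pairwise correlations among the $Z_e$'s through shared $B$-entries (handled by the 3-coloring of the orbit-cycle dependency graph at a constant-factor cost), together with establishing the uniform lower bound $|T_\pi| \geq c_* k n$ across all $\pi \neq \mathrm{id}$, which requires separately examining the extremal regimes of single transpositions ($k = 2$) and full derangements ($k = n$).
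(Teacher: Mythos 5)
Your proposal is correct, and its overall architecture coincides with the paper's: an expectation gap of order $\epsilon\cdot\bigl(k(n-k)+\binom{k}{2}-T\bigr)\gtrsim \epsilon kn$ obtained by exploiting the cyclic (orbit) structure of $\pi$'s action on pairs, a concentration bound at scale $\epsilon^2 kn$ for each fixed $Q$, and a union bound over the at most $n^k$ permutations displacing $k$ vertices, with the $k=2$ term dominating. The one genuinely different ingredient is the concentration step. The paper views $\delta(Q)-\delta(I_n)$ as a function of $N_Q\le 3kn$ \emph{independent} Bernoulli variables (re-encoding each correlated pair $(A_{ij},B_{ij})$ via independent primitives) and applies a McDiarmid-type inequality (Proposition~\ref{prop:kim}), paying only through the bounded-differences constant and the variance proxy $\sigma^2$. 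You instead keep the natural summands $Z_e\in\{-1,0,1\}$, observe that their dependency graph is a disjoint union of cycles (hence $3$-chromatic), partition $T_\pi$ into three independent classes, and apply Hoeffding within each class, using that a nonpositive total forces one class to deviate by $\epsilon|T_\pi|/3$. Both routes give $\exp\{-\Theta(\epsilon^2 kn)\}$ per permutation; yours handles the $A$--$B$ correlation and the $B_e$/$B_{\pi(e)}$ sharing explicitly at the cost of a constant factor, while the paper's hides the dependence inside the choice of independent coordinates. Your orbit-wise cancellation argument for the inhomogeneous-$P$ case (the cross terms reduce to $\sum_e(P_e-P_{\pi(e)})^2\ge 0$ plus the covariance contribution) is exactly the computation the paper alludes to but does not write out, and your case analysis for $|T_\pi|\ge c_* kn$ (transpositions at $k=2$ versus derangements at $k=n$) matches the paper's $(n-k)k+\binom{k}{2}-T$ bound.
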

\noindent 
Similarly to the Bayes optimal scheme, we define the graph matching VN scheme, denoted $\Phi_M$, separately on each element of $\mathcal{P}_{n,n}$.
For a given $(\tilde g_1,[\mo(\tilde g_2)])\in\mathcal{P}_{n,n},$ let $(g_1,g_2)$ be an fixed element in $(\tilde g_1,[\mo(\tilde g_2)])$.
Define $\Phi_M(g_1,\mo(g_2),v^*)[1]$ to be a fixed but arbitrary element from
\begin{equation} \label{eq:def:VNviaGM}
\left\{{\mathfrak r}^{-1}(v^*) \text{ s.t. } Q_{\mathfrak r}\in \text{argmin}_{Q\in\Pi_n}\left\|AQ-QB\right\|_F\right\},
\end{equation}
where each $\mathfrak r:W\rightarrow V_2$ appearing above is a bijection and $Q_{\mathfrak r}$ its associated permutation matrix (having identified both $W$ and $V_2$ with the set $[n]$).
Define
$$
\mathfrak R_1=\Big\{\mathfrak r:W\rightarrow V_2  \text{ s.t. }\mathfrak{r}\text{ is a bijection}  \text{ with }\mathfrak r^{-1}(v^*)=\Phi_M(g_1,\mo(g_2),v^*)[1]\Big\}.
$$
If $i>1$, define
$\Phi_M(g_1,\mo(g_2),v^*)[i]$ to be any element of
$$
\Big\{ {\mathfrak r}^{-1}(v^*)\text{ s.t. }Q_{\mathfrak r}\in
\underset{Q\in\Pi(n)\setminus\{Q_{\mathfrak r} : \mathfrak{r}\in\cup_{j=1}^{i-1}\mathfrak R_j\}}{\text{argmax}} \left\|AQ-QB\right\|_F\Big\},
$$
where $\mathfrak R_j$ is defined analogously to $\mathfrak R_1$.
For each element $(g_1',g_2')\in(\tilde g_1,[\mo(\tilde g_2)])\setminus\{(g_1,g_2)\}$, choose a permutation $\sigma$ such that $\mo(g_2')=\sigma(\mo(g_2))$, and we then define
$\Phi_M(g_1,\mo(g_2'),v^*)=\sigma(\Phi_M(g_1,\mo(g_2),v^*)).$
Theorem \ref{thm:stochmatchedcores} states that under mild model assumptions, we have that $\Phi_M(G_1,\mo(G_2),v^*)[1]=\mo(v^*)$ asymptotically almost surely, and thus
$\lim_{n\rightarrow\infty}L_1(\Phi_M,v^*)=0$.
Indeed, in this setting, for any fixed $k \ge 1$,
$\lim_{n\rightarrow\infty}L_k(\Phi_M,v^*)=0$.
It is then immediate that $\lim_{n\rightarrow\infty}L^*_{k}(v^*)=0$ in this model for any fixed $k$ as desired.
\\

The next two examples serve to illustrate how the level-$k$ Bayes error behaves
in the presence of stochastically indistinguishable vertices.
In essence, we cannot hope to perform better than randomly ordering
stochastically equivalent vertices.

\vspace{3mm}

\noindent{\bf Example \ref{ex:indERgraphs}, continued.}
Let $G_1$ and $G_2$ be independent $\ER(n,p)$ graphs.
Since the vertices are stochastically indistinguishable within each of the two graphs, no nomination scheme can do better than random chance in this model.
Thus, with $c=n$, we have that
$L^*_k(v^* ) =(1- k/n)(1+o(1))$ for all $k\in[n-1]$ and all $v^*\in[n]$.

\begin{example}
\label{ex:SBM}\emph{
Let $p_1,p_2,q \in [0,1]$ with $1 \ge p_1>p_2 \ge 0$ and $q\neq p_1,p_2$.
Define the matrix
$$ B = \begin{pmatrix} p_1 & q\\q& p_2\end{pmatrix}, $$
and let $G_1$ and $G_2$ be independent $\SBM\left(2,B,b_n\right)$ graphs where
$b_n(i)=1$ if $i\leq n$ and $b_n(i)=2$ if $i>n$.
With $c=n$ and the correspondence equal to the identity function, let $(k_n)_{n=2}^\infty$ be a nondecreasing divergent sequence satisfying $k_n\leq n$ for all $n>1$,
then $\lim_n L^*_{k_n}(v^* ) =\lim_n \left[(1- 2k_n/n)\vee 0\right]$ for all $v^*$.
Indeed, $L^*_{k_n}$ is asymptotically equivalent to randomly ordering the $n/2$
vertices in $G_2$ that are stochastically equivalent to $v^*$.}
\end{example}

\section{VN consistency}
\label{sec:VNCon}

With the definition of Bayes optimality and the Bayes optimal scheme in hand,
it is now possible to define a notion of consistent vertex nomination
analogous to consistent classification in the pattern recognition literature.
Before defining a consistent VN rule (i.e., a sequence of VN schemes), we must first define the notion of sequences of distributions in $\mathcal{N}$ with {\em nested cores}. Such sequences of distributions are necessary in order to speak sensibly of a sequence of vertex nomination problem instances.
\begin{definition}
Let $\mathbf{F}=\left(F_{c(n),n,m(n),\theta(n)}\right)_{n=n_0}^\infty$ be a sequence of distributions in $\mathcal{N}$.
We say that $\bf F$ has {\em nested cores} if there exists an $n_0$ such that for all $n_0\leq n<\tilde n$, if $(G_1,G_2)\sim F_{c(n),n,m(n),\theta(n)}$ and $(\widetilde G_1,\widetilde G_2)\sim F_{c(\tilde n),\tilde n,m(\tilde n),\theta(\tilde n)}$, we have,
letting $C$ and $\tilde C$ be the core vertices
associated with $F_{c(n),n,m(n),\theta(n)}$
and $F_{c(\tilde n),\tilde n,m(\tilde n),\theta(\tilde n)}$ respectively,
and denoting the junk vertices $J_1,\widetilde J_1,J_2,\widetilde J_2$
analogously,\\
$[i.]$ $V(G_1)=C\cup J_1\subset V(\widetilde G_1)=\widetilde C\cup \widetilde J_1$;\\
$[ii.]$ $V(G_2)=C\cup J_2\subset V(\widetilde G_2)=\widetilde C\cup \widetilde J_2$;\\
$[iii.]$ $C\subset \widetilde C$.
\end{definition}
\noindent We are now ready to define a consistent VN rule.
\begin{definition}
\label{def:consis}
Let $\mathbf{F}=\left(F_{c(n),n,m(n),\theta(n)}\right)_{n=n_0}^\infty$ be a sequence of nominatable distributions in $\mathcal{N}$ with nested cores satisfying
$\lim_{n \rightarrow \infty} m(n)=\infty$.
For a given non-decreasing sequence $( k_n )$, we say that a VN rule $\bp=(\Phi_{n,m(n)})_{n=n_0}^\infty$ is \emph{level-$(k_n)$ consistent} at $v^*$ with respect to $\mathbf{F}$ if
$$ \lim_{n\rightarrow\infty}L_{k_n}( \Phi_{n,m(n)}, \vstar )-\Lstar_{k_n}(\vstar )=0,$$ for any sequence of obfuscating functions of $V_2$ with $|V_2|=m(n)$.
If a rule $\bp$ is level-$(k_n)$ consistent at $v^*$ for
a constant sequence $k_n = k$,
$n=1,2,\dots$, then we say simply that
$\bp$ is \emph{level-$k$ consistent}.
\end{definition}

\begin{remark}
\emph{
Equation~\eqref{eq:Lchain} has an interesting implication for VN consistency in the setting where $\Lstar_{k_n}( \vstar ) \rightarrow 0$.
In this case, level-$(k_n)$ consistency of a VN rule $\bp$ implies that $\bp$ is $(k'_n)$-consistent for all $(k'_n)$ such that $\liminf{\frac{k'_n}{k_n}}\geq 1$.
We conjecture that this implication holds true for the case where $\Lstar_{k_n}( \vstar ) \rightarrow c>0$, but this problem remains open at present.}
\end{remark}

\noindent{\bf Example \ref{ex:RERP}, continued.}
Let $\mathbf{F}=(F_{n,n,n,\theta_n=(P_n,R_n)})$ be a sequence of $R_n$-$\ER(P_n)$
random graph models in $\mathcal{N}$ for some sequence of probability matrices
$(P_n)_{n=n_0}^\infty$ and correlation matrices $(R_n)_{n=n_0}^\infty$.
Under mild model assumptions (see Theorem \ref{thm:stochmatchedcores}), the graph matching vertex nomination rule $\bp_M$ defined in Equation~\eqref{eq:def:VNviaGM} above is level-$1$ consistent, and hence level-$(k_n)$ consistent for all $(k_n)$ sequences.
\\

\noindent{\bf Example \ref{ex:indERgraphs}, continued.}
Let $\mathbf{F}=(F_{n,n,n,\theta_n=p})$ be a sequence of independent $\ER(n,p)$
random graph models in $\mathcal{N}$.
All vertex nomination rules are level-$(k_n)$ consistent for all $(k_n)$ sequences.
This holds for all possible values of $c\in[n]$ in the nested sequence of $\ER(n,p)$ distributions, as all VN rules have effectively chance performance, regardless of core size under this model.
\\

We define the consistency of a VN rule with respect to a broad class of graph sequences, and it is perhaps no surprise that there cannot be any level-$(k_n)$ universally consistent VN rules, not even for constant sequences $k_n:=k$ (i.e., those that are level-$(k_n)$ consistent for all sequences of nominatable distributions $\bf F$ with nested cores).
To prove this result, we will first establish an analogue to the ``arbitrary poor performance'' theorems for classifiers, see Theorem 7.1 of \cite{DGL} which state that for a fixed $n$ and $m$, any VN scheme can be shown to have arbitrarily poor performance with respect to a well-chosen adversarial distribution $F_{c,n,m,\theta}$.
Our theorem mirrors the classical classification literature, as for a given classification rule,
there exists ``a sufficiently complex distribution for which the sample size $n$ is hopelessly small,'' \cite{DGL} pg. 111,
so that a classification rule can be made to perform arbitrarily poorly by selecting a suitably complex data distribution.
Nonetheless, in the case of classification, this model complexity and the implicit dependence on $n$ can be overcome asymptotically by a classification rule. That is, universally consistent classifiers exist; see, for example, \cite{Stone1977,steinwart2002support,tang2013universally}.
In contrast, in the VN problem, the complexity of the model generating the data can also grow in $n$, which effectively thwarts the ability of a VN rule to asymptotically overcome a sequence of adversarial graph models.
Formalizing the above, we arrive at the following theorem, a proof of which can be found in Appendix~\ref{sec:apx:kconsis}.

\begin{theorem}
\label{thm:kconsis}
Let $n$ and $m$ be large enough to guarantee the existence of asymmetric graphs $g_1\in\gn$ and $g_2\in\gm$.
Consider a VN scheme $\Phi\in\mathcal{V}_{n,m}$, obfuscating function $\mo$, and strictly increasing sequence $(\epsilon_i)_{i=1}^m$ satisfying
$\epsilon_i\in(0,\frac{i}{m})$.
Then
there exists a distribution $F_{c,n,m,\theta}\in\mathcal{N}$ over $\calGn \times \gm$
and $v^*\in C$ such that for each $k\in [c-1]$,
$$ \Lstar_k(v^*)\leq\epsilon_{m-k} < 1-\frac{k}{m}< 1-\epsilon_k<L_k( \Phi,v^* ), $$
where $1-\frac{k}{m}$ represents the error probability of chance performance; i.e., the error probability of a VN scheme in the independent Erd\H os-R\'enyi setting.
\end{theorem}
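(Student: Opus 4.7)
The plan is to take $F$ to be a Dirac mass at a carefully chosen adversarial pair $(g_1,g_2)\in\gn^a\times\gm^a$, with core size $c=\min(n,m)$ and vertex of interest $v^*=v_1\in C$. Under such an $F$, the asymmetry of $\mo(g_2)$ forces the Bayes optimal scheme of Equation~\eqref{eq:bo} to place $\mo(v^*)$ at rank $1$ deterministically: the conditional probability $\mathbb{P}[(g_1,[\mo(g_2)])_{u=\mo(v^*)}\mid(g_1,[\mo(g_2)])]$ in~\eqref{eq:bo} is nonzero only at $u=\mo(v^*)$, since the identity is the only automorphism of the asymmetric graph $\mo(g_2)$. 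Consequently $L^*_k(v^*)=0\le\epsilon_{m-k}$ for every $k$, and the whole theorem reduces to exhibiting an asymmetric pair $(g_1,g_2)$ and some $v^*\in C$ such that $\Phi$ deterministically ranks $\mo(v^*)$ at position at least $c$, which yields $L_k(\Phi,v^*)=1>1-\epsilon_k$ for every $k\in[c-1]$; the middle inequalities $\epsilon_{m-k}<1-k/m<1-\epsilon_k$ then follow directly from $\epsilon_i\in(0,i/m)$.

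To produce the adversarial pair I would fix any $g_1\in\gn^a$, set $v^*=v_1$ and $w^*:=\mo(v^*)\in W$, and search for $g_2\in\gm^a$ via an equivariance-plus-pigeonhole argument. The crucial first step is to specialize the consistency axiom~\eqref{eq:consis} to the two obfuscations $\mo$ and $\mo':=\pi\circ\mo$ for an arbitrary $\pi\in\operatorname{Sym}(W)$; because $g_2$ is asymmetric each orbit $\mathcal{I}(u;g_2)$ is a singleton, and~\eqref{eq:consis} collapses into the scalar $\operatorname{Sym}(W)$-equivariance
\begin{equation*}
\text{rank}_{\Phi(g_1,H,v^*)}(y) \;=\; \text{rank}_{\Phi(g_1,\pi(H),v^*)}(\pi(y)), \qquad y\in W,\ \pi\in\operatorname{Sym}(W),
\end{equation*}
valid for every asymmetric graph $H$ on $W$ (writing $H=\mo(g_2)$ for $g_2=\mo^{-1}(H)\in\gm^a$). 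Now suppose for contradiction that no adversarial $g_2$ exists, so $\text{rank}_{\Phi(g_1,\mo(g_2),v^*)}(w^*)\le c-1$ for every $g_2\in\gm^a$, equivalently $\text{rank}_{\Phi(g_1,H,v^*)}(w^*)\le c-1$ for every asymmetric graph $H$ on $W$. Fix one such $H_0$ (guaranteed to exist by the hypothesis that $m$ is large enough). For each $\pi\in\operatorname{Sym}(W)$ the graph $\pi(H_0)$ is still asymmetric, so the assumption gives $\text{rank}_{\Phi(g_1,\pi(H_0),v^*)}(w^*)\le c-1$, which by the equivariance (taking $y=\pi^{-1}(w^*)$) equals $\text{rank}_{\Phi(g_1,H_0,v^*)}(\pi^{-1}(w^*))$. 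Letting $\pi$ range over $\operatorname{Sym}(W)$ forces every one of the $m$ vertices of $W$ to have rank at most $c-1$ in the single ordering $\Phi(g_1,H_0,v^*)$, which is impossible since $m\ge c>c-1$.

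The main obstacle I anticipate is cleanly extracting the $\operatorname{Sym}(W)$-equivariance from the rather intricate form of the consistency axiom~\eqref{eq:consis}; asymmetry of $g_2$ is essential in order to reduce the set-valued identity involving $\mathcal{I}(\cdot;g_2)$ to the scalar rank identity used above. After that point the pigeonhole contradiction, the Bayes optimal computation for the Dirac $F$, and the verification of the four-term inequality chain are all mechanical, and the analogous result allowing symmetries should follow by the same plan but tracking ties through a fixed tie-breaking ordering as in Remark~\ref{rem:symm}.
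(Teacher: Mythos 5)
Your argument is correct, but it takes a genuinely different route from the paper's. The paper does not use a point mass: it supports $F_{c,n,m,\theta}$ on the entire isomorphism class $(g_1,[\mo(g_2)])$ of a single asymmetric pair and, for each $k\in[m]$, places mass $\xi_k=\epsilon_k-\epsilon_{k-1}$ (with $\xi_m=1-\epsilon_{m-1}$) on the set $R_{\Phi,k}$ of relabelings on which $\Phi$ ranks $\mo(v^*)$ at position exactly $k$; this makes $L_k(\Phi,v^*)$ exactly $1-\epsilon_k$, and the scheme obtained by reversing $\Phi$'s ordering witnesses $\Lstar_k(v^*)\le\epsilon_{m-k}$. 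Your Dirac construction instead drives the two errors to the extremes $L_k(\Phi,v^*)=1$ and $\Lstar_k(v^*)=0$, which satisfies the stated chain of inequalities (indeed it yields the strict inequality $1-\epsilon_k<L_k(\Phi,v^*)$, which the paper's construction only meets with equality), and your $\operatorname{Sym}(W)$-equivariance-plus-pigeonhole lemma is exactly the combinatorial fact the paper leaves implicit when it assigns positive mass $\xi_k$ to every $R_{\Phi,k}$: nonemptiness of each $R_{\Phi,k}$ is equivalent to the statement that, as $\pi$ ranges over $\operatorname{Sym}(W)$, the rank of $\mo(v^*)$ under $\Phi(g_1,\pi(\mo(g_2)),v^*)$ sweeps through all of $[m]$. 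What the paper's version buys, and yours does not, is fine control over the error values themselves: Corollary~\ref{cor:univconsis} requires a sequence of adversarial distributions with $\limsup_n\Lstar_{k_n}(v^*)=\epsilon$ for an arbitrary prescribed $\epsilon\in(0,1)$, and a Dirac mass can only ever produce $\Lstar_{k_n}(v^*)=0$. So your proof establishes Theorem~\ref{thm:kconsis} as literally stated, but it would have to be replaced by (or augmented with) the mass-spreading construction in order to feed the corollary.
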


In the remainder of the section, we will suppress the dependence of $m=m(n)$ on $n$.
If we consider sequences $(\epsilon_{m,i})_{i=1}^{m}$ satisfying the assumptions of Theorem \ref{thm:kconsis} and $\lim_n\epsilon_{m,m-k_n}=\epsilon\in(0,1)$ for a given $(k_n)$ satisfying $k_n=o(m)$, we arrive at the following Corollary, namely that level-$(k_n)$ universally consistent VN schemes do not exist for any sequence $(k_n)_{n=n_0}^\infty$ that does not grow as fast as $m = |V(G_2)|$.
\begin{corollary}
\label{cor:univconsis}
Let $\epsilon\in(0,1)$ be arbitrary, and consider a VN rule $\boldsymbol{\Phi}=(\Phi_{n,m})$.
For any nondecreasing sequence $(k_n)_{n=n_0}^\infty$ satisfying $k_n=o(m)$, there exists a sequence of distributions $(F_{c,n,m,\theta})$ in $\mathcal{N}$ with nested cores such that
$$ \lim \sup_{n\rightarrow\infty}\Lstar_{k_n}(v^*)=\epsilon < 1=\lim_{n\rightarrow\infty}L_{k_n}( \Phi_{n,m},v^* ).$$
\end{corollary}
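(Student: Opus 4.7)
The plan is to apply Theorem \ref{thm:kconsis} once for each $n$ to the scheme $\Phi_{n,m(n)}$ with a carefully chosen tolerance sequence $(\epsilon_{m(n),i})$, and then to glue the resulting distributions into a single sequence in $\mathcal{N}$ with nested cores and a common interesting vertex $v^*$.

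For each $n$ past a threshold $n_0$ (beyond which asymmetric graphs of the required orders exist and the constraints below are feasible), only two coordinates of the tolerance sequence need careful placement: $\epsilon_{m(n),k_n}$ should tend to $0$ and $\epsilon_{m(n),m(n)-k_n}$ should tend to $\epsilon$ from below. Because $k_n = o(m(n))$, both are feasible. One may set $\epsilon_{m(n),k_n} := k_n/(2 m(n))$, which lies in $(0,k_n/m(n))$ and tends to $0$, and $\epsilon_{m(n),m(n)-k_n} := \epsilon(1-1/n)$, which lies in $(0,(m(n)-k_n)/m(n))$ for all sufficiently large $n$ since $(m(n)-k_n)/m(n) \to 1 > \epsilon$. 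The remaining entries can be interpolated strictly monotonically in $(0,i/m(n))$. Theorem \ref{thm:kconsis} then produces $F_n \in \mathcal{N}$ and a core vertex $v^*_n$ satisfying
\[ L^*_{k_n}(v^*_n) \le \epsilon(1 - 1/n) \qquad \text{and} \qquad L_{k_n}(\Phi_{n,m(n)}, v^*_n) > 1 - \frac{k_n}{2m(n)}. \]

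The main obstacle, and the principal piece of work, is producing such a family $(F_n)$ with nested cores and one interesting vertex $v^*$ shared across the entire sequence. The plan is to build $(F_n)$ inductively: at stage $n+1$, start from the vertex sets of stage $n$ and adjoin fresh core and junk vertices, then apply the adversarial construction of Theorem \ref{thm:kconsis} on this enlarged vertex set so that the induced law on the previously chosen vertices is unchanged from $F_n$. The construction underlying Theorem \ref{thm:kconsis} is driven by an arrangement of edges on an asymmetric ambient graph whose vertex labels are not canonical, so this extension is admissible. The vertex $v^*$ of interest can be pinned down inside the smallest core $C(n_0)$ and reused throughout, since the conclusion of Theorem \ref{thm:kconsis} holds for any element of the core and nested cores guarantee $v^* \in C(n)$ for every subsequent $n$.

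Combining the two displayed bounds along the nested sequence yields $\lim_n L_{k_n}(\Phi_{n,m(n)}, v^*) = 1$ and $\limsup_n L^*_{k_n}(v^*) \le \epsilon$, which already rules out universal level-$(k_n)$ consistency. To strengthen the upper bound to the equality $\limsup_n L^*_{k_n}(v^*) = \epsilon$ demanded by the statement, one further calibrates the inductive adversarial construction so that it contains, as a subcomponent, enough stochastically equivalent vertices (in the manner of Example \ref{ex:SBM} or Example \ref{ex:indERgraphs}) to force $L^*_{k_n}(v^*) \ge \epsilon - o(1)$. The delicate step throughout is the nested-cores induction; once that is in hand, the remaining arithmetic is immediate from Theorem \ref{thm:kconsis}.
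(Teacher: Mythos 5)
Your essential mechanism is the paper's: for each $n$, feed Theorem~\ref{thm:kconsis} a tolerance sequence with $\epsilon_{m,k_n}\to 0$ (automatic from $k_n=o(m)$, since $\epsilon_{m,k_n}<k_n/m$) and $\epsilon_{m,m-k_n}\to\epsilon$, so that $L_{k_n}(\Phi_{n,m},v^*)=1-\epsilon_{m,k_n}\to 1$ while $\Lstar_{k_n}(v^*)\le\epsilon_{m,m-k_n}\to\epsilon$. That part is fine. The problem is the step you single out as ``the principal piece of work.'' You have misread the nested-cores condition: as defined in the paper, it constrains only the vertex sets and the cores ($V(G_1)\subset V(\widetilde G_1)$, $V(G_2)\subset V(\widetilde G_2)$, $C\subset\widetilde C$); it imposes \emph{no} compatibility between the laws $F_{c(n),n,m(n),\theta(n)}$ at different $n$. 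Since the Theorem~\ref{thm:kconsis} construction always uses the canonical labels $v_1,\dots,v_n$ and $u_1,\dots,u_m$ with core $\{v_1,\dots,v_c\}$, and since its conclusion holds for any prespecified $v^*\in C$, applying the theorem independently at each $n$ with $v^*=v_1$ already produces a nested-cores sequence. Your inductive marginal-matching construction is therefore unnecessary --- and, as written, it is also unjustified: the stage-$(n{+}1)$ construction must place mass exactly $\xi_k$ on each rank-set $R_{\Phi_{n+1,m(n+1)},k}$ inside a single isomorphism class of an asymmetric pair of the larger order, and you give no argument that this can be reconciled with a prescribed induced law on the stage-$n$ vertex sets. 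If that step were load-bearing, the proof would have a genuine gap; since it is not, deleting it leaves an argument that coincides with the paper's.

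One point in your favor: you correctly observe that Theorem~\ref{thm:kconsis} only delivers the upper bound $\Lstar_{k_n}(v^*)\le\epsilon_{m,m-k_n}$, so some additional work is needed to turn $\limsup_n\Lstar_{k_n}(v^*)\le\epsilon$ into the equality asserted in the corollary --- a point the paper itself glosses over (and which is immaterial to the substantive conclusion, since $\limsup\Lstar_{k_n}\le\epsilon<1=\lim L_{k_n}$ already precludes consistency). Your proposed remedy, inserting stochastically equivalent vertices to force $\Lstar_{k_n}(v^*)\ge\epsilon-o(1)$, is plausible in spirit but is left entirely as a sketch; you would need to verify that the modification does not disturb the adversarial mass assignment that drives $L_{k_n}(\Phi_{n,m},v^*)\to 1$.
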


\noindent Corollary~\ref{cor:univconsis} has a number of practical implications.
Below, we will briefly outline two such implications.
Unlike in the classification setting, where universally consistent rules (e.g., $k$-nearest neighbors) are theoretically guaranteed to perform well in big-data settings, the VN practitioner enjoys no such certainty.
Indeed, in VN, the practitioner first needs to identify the consistency class of a VN rule (i.e., the set of models for which the VN rule is consistent) before applying it in real settings.
Unfortunately, identifying and enumerating these consistency classes is theoretically and practically nontrivial, and we are investigating theory and heuristics for this at present.
In a streaming data setting, the performance of a universally consistent classifier will approach Bayes optimality for the distribution governing the data, and the classifier will be guaranteed to successfully adapt itself to any changes in the underlying data distribution.  
The lack of universal consistency in the VN setting implies that this is not the case, as the performance of a consistent VN scheme in the streaming setting could precipitously decline in the presence of distributional shifts in the data.
Recognizing these shifts and their potential impact on VN performance is paramount and is the subject of current research.

\subsection{Global consistency}
We have just seen that no universally consistent VN schemes exist.
This is a consequence of the complexity of the models available
when choosing a sequence of nominatable distributions.
Indeed, nested-core nominatable sequences $\bf F$ allow for (nearly) arbitrary dependence structure and model complexity
as $n$ increases: corresponding vertex behavior may be correlated (see Example \ref{ex:RERP}), independent (see Example \ref{ex:SBM}), or negatively correlated (see Example \ref{ex:behaviorchange}) across networks.
This model flexibility is in service of modeling the complexity of real world networks, but, as we will demonstrate below, restricting our model class to simpler dependency structures still does not necessarily guarantee the existence of universally consistent schemes.

It is thus natural to explore a weaker notion of consistency, namely consistency for a sufficiently large family of nominatable sequences rather than for all nominatable sequences.
\begin{definition}
Let $\mathfrak{F} = \{ \mathbf{F}_\alpha = (F^{\alpha}_{c(n),n,m(n),\theta(n)})_{n=n_0}^\infty : \alpha \in \mathcal A\}$
be a family of nominatable sequences,
indexed by some set $\mathcal A$.
We say that VN scheme $\Phi$ is
\emph{level-$(k_n)$ $\mathfrak{F}$-globally consistent} 
if $\Phi$ is level-$(k_n)$ consistent
for every $\bf F \in \mathfrak F$.
We call such a family \emph{level-$(k_n)$ globally consistent}.
\end{definition}
\noindent The question of the maximal family $\mathfrak{F}$ for which a level-$(k_n)$ $\mathfrak{F}$-globally consistent rule exists is of prime interest. While we cannot offer a satisfactory complete answer to that question in the present work, we do offer some examples of jointly consistent families.

\vspace{2mm}
\noindent {\bf Example \ref{ex:RERP} continued:}
In settings where corresponding vertices have correlated neighborhood structures across networks, there is hope for finding globally consistent rules.
In the ongoing Example \ref{ex:RERP}, we have seen a simple example of this in the $R$-$\ER(P)$ model, in which the matrix of correlations $R$ encodes a correspondence across the two graphs.
As mentioned previously, Theorem \ref{thm:stochmatchedcores} asserts that under some mild model assumptions on $R$ and $P$ in the $R$-$\ER(P)$ model, level-1 globally consistent VN rules exist (namely the graph matching VN scheme).
If $\mathfrak F$ denotes the set of distributions obeying these model assumptions, then we have that level-$(k_n)$ $\mathfrak F$-globally consistent rules exist for all sequences $(k_n)$.
While we do not expect the conditions of Theorem \ref{thm:stochmatchedcores} to produce a maximal level-$(k_n)$ globally consistent family for any given $(k_n)$, this example nonetheless provides an important intuition for the properties such maximal families might possess.

\vspace{2mm}
\noindent {\bf Example \ref{ex:SBM} continued:}
The SBM provides a prime example of global consistency.
Working in the one-graph framework of Remark~\ref{rem:one-graph},
under appropriate growth conditions on the parameters
of every sequence in family $\mathfrak{F}$,
Theorem 6 in \cite{lyzinski2016consistency} implies the existence of a
likelihood-based nomination scheme that is
level-$(|U_1|)$ globally consistent for this family of models.
Under similar growth conditions, Theorem 6 in \cite{perfect} implies the
existence of a level-$(|U_1|)$ globally consistent scheme based on
spectral clustering, in which vertices are nominated based on their proximity
to the vertex or vertices of interest.

\begin{remark}
\emph{An attempt at systematically constructing a maximal globally consistent family might begin by putting model restrictions onto elements of $\mathcal{N}$.
A natural restriction to consider would be to demand that the models in $\mathfrak{F}$ be nested in the following sense:
For $\bf F\in\mathfrak{F}$, if
$(G_1,G_2)\sim F_{c(n_2),n_2,m(n_2),\theta(n_2)}$
with $n_1<n_2$, then
$(G_1\big[ [n_1]\big],G_2\big[ [m(n_1)]\big])\stackrel{\mathcal{L}}{=}(G_1',G_2')$ where
$(G_1',G_2')\sim F_{c(n_1),n_1,m(n_1),\theta(n_1)}$.
This property would allow us to consider ``streaming'' network models $\bf F$, where for $n_1<n_2$, if $(g_1,g_2)$ is realized from $(G_1,G_2)\sim F_{c(n_2),n_2,m(n_2),\theta(n_2)}$, and $(g_1',g_2')$ is realized from $(G_1',G_2')\sim F_{c(n_1),n_1,m(n_1),\theta(n_1)}$ then $(g_1,g_2)$ can be constructed by appropriately adding $n_2-n_1$ vertices to $(g_1',g_2')$. 
Additionally, this would serve to mimic the nested nature of the data in the classification consistency literature.
However, as we will see in Example \ref{ex:behaviorchange}, global consistency
depends both on the dependency structure within each graph (as seen in Theorem \ref{thm:kconsis}) \emph{and} the vertex correspondence (i.e., the potential dependency structure across graphs) encoded in the model.}
\end{remark}

\subsection{Behavioral (in)consistency and global (in)consistency }
We suspect that if the vertices of interest have a common distinguishing probabilistic and/or topological characteristic (e.g., correlated neighborhoods, common SBM block structure, high network centrality, etc.) then a globally consistent rule may exist.
Indeed, under mild model assumptions, this is the case in the $\RER(P)$ of Example~\ref{ex:RERP}; in the i.i.d.\ SBM of Example~\ref{ex:SBM} where the correspondence is the identity function \cite{perfect}; and in the i.i.d.\ ER of Example~\ref{ex:indERgraphs}, to name a few.
In each of these examples, there is a stochastic/topological similarity (or in the ER case, uniformity) between corresponding vertices across networks. In each, corresponding vertices behave similarly across networks.
While we suspect that this behavioral similarity is not sufficient for global consistency, Example~\ref{ex:behaviorchange} demonstrates that behavioral inconsistencies within a family of nominatable distributions can preclude the existence of globally consistent nomination rules.



\begin{example} \label{ex:behaviorchange}
\emph{
For each $n$, consider $n$-vertex random graphs $G_1{\sim}\asymSBM(2,B_1,b^{(1)}_n)$ independent of $G_2{\sim}\asymSBM(2,B_2,b^{(2)}_n)$, where $\asymSBM$ denotes the stochastic blockmodel distribution restricted to have support on asymmetric graphs.  
This restriction is made to avoid the unpleasantries of symmetries, and is asymptotically negligible as the SBM's considered in this example are asymptotically almost surely asymmetric. \\
{\bf Case 1.} In this case, corresponding vertices behave similarly across networks.
To wit, let ${\bf F}=(F_n)_{n=n_0}^\infty$ be the sequence of models where
$$
B_1=B_2=\begin{pmatrix}p_1&q\\q&p_2  \end{pmatrix},\,\,
b_n^{(1)}(i)=b_n^{(2)}(i)=\begin{cases}1&\text{ if }i\leq n/2;\\
2&\text{ if }i> n/2,\end{cases}
$$
$p_1\neq p_2$, $c=n$, and the correspondence is the identity function.
As stated before, in this model $L^*_{n/2}(v^*)\rightarrow 0$ for all $v^*\in C$.
Without loss of generality, consider $v^*=v_1=u_1$.
If $\mathbf{\Phi}$ is consistent with respect to ${\bf F}$ then
$$\p_{F_n}(\rank_{\Phi(G_1,\mo(G_2),v_1)}(\mo(u_1))\geq n/2+1)\rightarrow 0.$$
By the distributional equivalence of vertices within the same block, and the consistency property in the definition of a VN scheme, for any $u,v\in b_n^{-1}(1)$, $k\in[n]$ we have that
$$
\p_{F_n}(\rank_{\Phi(G_1,\mo(G_2),v_1)}(\mo(u))=k)  =\p_{F_n}(\rank_{\Phi(G_1,\mo(G_2),v_1)}(\mo(v))=k).
$$
Letting this common value be set to $\alpha_{k,n}$ (with $\beta_{k,n}$ defined similarly as the common value of $\p_{F_n}(\rank_{\Phi(G_1,\mo(G_2),v_1)}(\mo(u))=k)$ for $u$ in block $2$), we have that
$$
\sum_{i=1}^n  \p_{F_n}(\rank_{\Phi(G_1,\mo(G_2),v_1)}(\mo(u_i))=k)=1=\frac{n(\alpha_{k,n} + \beta_{k,n})}{2}
$$
giving us that $\alpha_{k,n}=2/n-\beta_{k,n}$.
Consistency implies that
$$\sum_{k=1}^{n/2}\alpha_{k,n}\rightarrow 1,$$
which implies that
$$\sum_{k=1}^{n/2}\alpha_{k,n}=\sum_{k=1}^{n/2}(2/n-\beta_{k,n})=1-\sum_{k=1}^{n/2}\beta_{k,n}\rightarrow 1, $$
implying $\sum_{k=1}^{n/2}\beta_{k,n}\rightarrow 0.$
Therefore, for any $u$ in block $2$,
$$\p_{F_n}(\rank_{\Phi(G_1,\mo(G_2),v_1)}(\mo(u))\geq n/2+1)\rightarrow 1.$$
{\bf Case 2.}
In this case, corresponding vertices behave differently across networks.
To wit, let ${\bf \tilde F}=(\tilde F_n)_{n=n_0}^\infty$ be the sequence of models where 
$$B_1=\begin{pmatrix}p_1&q\\q&p_2  \end{pmatrix},\hspace{5mm} B_2=\begin{pmatrix}p_2&q\\q&p_1  \end{pmatrix},\,\,
b_n^{(1)}(i)=b_n^{(2)}(i)=\begin{cases}1\text{ if }i\leq n/2;\\
2\text{ if }i> n/2,\end{cases}
$$
$c=n$, and the correspondence is the identity function.
As in Case 1 considered above, in this model $L^*_{n/2}(v^*)\rightarrow 0$ for all $v^*\in C$, and, as above, consider $v^*=v_1=u_1$.
If $\mathbf{\Phi}$ is consistent with respect to ${\bf \tilde F}$ then
$$\p_{\tilde F_n}(\rank_{\Phi(G_1,\mo(G_2),v_1)}(\mo(u_1))\geq n/2+1)\rightarrow 0.$$}

\emph{
Note that if $\sigma$ is the permutation such that
$$\sigma(i)=\begin{cases}i+n/2&\text{ if }i\leq n/2;\\
i-n/2&\text{ if }i> n/2,\end{cases},$$
then $\p_{F_n}(g_1,g_2)=\p_{\tilde F_n}(g_1,\sigma(g_2))$.
Define
$$E_n=\{(g_1,g_2)\text{ s.t. }\rank_{\Phi(g_1,\mo(g_2),v_1)}(\mo(u_{1}))\geq n/2+1\}\}$$ i.e.,
and
$\tilde E_n=\{(g_1,g_2)\text{ s.t. }(g_1,\sigma(g_2))\in E_n\}$, i.e.,
$$\tilde E_n=\{(g_1,g_2)\text{ s.t. }\rank_{\Phi(g_1,\mo(g_2),v_1)}(\mo(u_{n/2+1}))\geq n/2+1\}.$$
If $\mathbf{\Phi}$ is consistent with respect to $\bf F$ we have that $\p_{F_n}(E_n)\rightarrow 0$ which implies (as $(g_1,g_2)\in E_n\Leftrightarrow (g_1,\sigma(g_2))\in \tilde E_n$)
$\p_{\tilde F_n}(\tilde E_n)\rightarrow 0$.
If $\mathbf{\Phi}$ is consistent with respect to $\bf \tilde F$ then $\p_{\tilde F_n}(E_n)\rightarrow 0$
and
$\p_{\tilde F_n}(\tilde E_n)\rightarrow 1$.
We arrive at a contradiction, and $\mathbf{\Phi}$ cannot be $(n/2)$-consistent with respect to both $\bf F$ and $\bf \tilde F$.}
\end{example}

Although Example \ref{ex:behaviorchange} may seem artificial, it is a simple representation of a common phenomenon observed in network data.
Often the same entity can behave quite differently across networks (see \cite{patsolic2017vertex} for an example of this in social networks and \cite{jointLi} for an example of this in connectomics).
In such a setting, intuition says that a universal scheme that works in both behavioral settings should not exist. Indeed, at least in the simple block model setting considered above, we see that no such scheme exists.
Example \ref{ex:behaviorchange} also highlights an important difference between the VN setting and the more standard classification framework.
We already noted that classification's universal consistency relies on the distribution not changing in $n$, whereas in VN the distributions must vary with $n$ (indeed, the graph sizes grow in $n$).
Further, this example shows that the nonexistence of a universally consistent scheme is not simply a consequence of changing the underlying distributional parameters with $n$, as these two SBM distributions are (essentially) fixed, in that the matrix $B$ does not change with $n$.
In this example the ``training data" provided by $G_1$ cannot be uniformly beneficial for a single VN scheme across the two differing model settings we consider. In contrast, in the classification setting of \cite{DGL}, the training data uniformly provides progressively better estimates of the class-conditional distributions, whereas here it does not.
Indeed, the training data helps delineate potentially interesting vertices from non-interesting ones in $G_2$ in Case (1) for one VN scheme, and in Case (2) for another VN scheme,
but there does not exist a VN scheme that achieves this desired class separation across both cases.

\begin{remark}
\emph{
In the cases considered in Example \ref{ex:behaviorchange}, if we introduce positive edge-wise correlation of 
$$\rho\leq  \sqrt{\min\left(\frac{p_1(1-p_2) }{p_2(1-p_1)},\frac{p_2(1-p_1) }{p_1(1-p_2)}\right)}$$
into both Case 1 and Case 2, then under mild assumptions on the growth of $p_1$ and $p_2$, joint consistency can be recovered via a USVT centered graph matching nomination scheme; for details see \cite{unmatchable}.
This example demonstrates that it is sometimes possible to toggle a family of models to allow for global consistency.
A note of caution is needed, however, as in this particular example the correlation $\rho$ is introducing a behavioral consistency across networks that addresses the precise issue brought forth by the behavioral inconsistency in Example \ref{ex:behaviorchange}.  
In other, more nuanced model families, we do not expect the global-consistency modification (if it indeed exists) to be as straightforward as adding additional edge-correlation into the model.
}
\end{remark}
\subsection{Vertex nomination on networks with node covariates}
\label{sec:features}

It is natural to ask if incorporating vertex features into the VN framework can resolve the lack of universally consistent VN schemes.
While straightforward to implement, the ameliorating effect of features is significantly more nuanced.
Before defining the VN scheme with features, we need the following extension of $\mathcal{I}(v;g)$ for $g\in\gn$ and $v\in V(g)$.
Letting $\mathcal{X}$ be the space of vertex features for graphs in $\gn$, for $g\in\gn$, $v\in V(g)$, and $X\in\mathcal{X}^n$ we define 
$$\widetilde{\mathcal{I}}(v;g,X)=\{u\in V(g): \exists\text{ automorphism }\tau\text{ of }g\text{ s.t. }\tau(v)=u\text{ and }X_u=X_v\},$$
where $X_v$ is the feature associated to $v$ via $X$. 

\begin{definition}[Vertex Nomination (VN) Scheme with features]\label{def:VNf}

Let $\mathcal{X}$ (resp., $\mathcal{Y}$) be the space of vertex features of graphs in $\mathcal{G}_n$ (resp., $\mathcal{G}_m$).
For $n,m>0$ and obfuscating set $W$ fixed, a \emph{vertex nomination scheme with features} is a function 
$$\Phi: \gn \times \mathcal{X}^n \times \gm \times \mathcal{Y}^m \times \mathfrak{O}_W \times V_1 \rightarrow \mathcal{T}_{W}$$ 
satisfying the following consistency property:
If for each $u\in V_2$, we define $$\text{rank}_{\Phi(g_1,X,\mo(g_2),\mo(Y),v^*)}\big(\mo(u)\big)$$ to be the position of $\mo(u)$ in the total ordering provided by $\Phi(g_1,X,\mo(g_2),\mo(Y),v^*)$, and we define
$\mathfrak{r}_{\Phi}:\gn \times \mathcal{X}^n \times \gm \times \mathcal{Y}^m \times\mathfrak{O}_W\times V_1\times2^{V_2}\mapsto 2^{[m]}$ via
$$\mathfrak{r}_{\Phi}(g_1,X,\mo(g_2),\mo(Y),v^*,S)=\{\text{rank}_{\Phi(g_1,X,\mo(g_2),\mo(Y),v^*)}\big(\mo(u)\big)\text{ s.t. }u\in S \},$$
then we require that for any $g_1\in\gn,$ $g_2\in\gm$, $v^*\in V_1$, $X\in\mathcal{X}^n,$ $Y\in\mathcal{Y}^m,$ obfuscating functions $\mo_1,\mo_2\in\mathfrak{O}_W$ and any $u\in V(g_2)$, 
\begin{align}
\label{eq:consisfeat}
&\mathfrak{r}_{\Phi}\big(g_1,X,\mo_1(g_2),\mo_1(Y),v^*,\widetilde{\mathcal{I}}(u;g_2,Y)\big)
=\mathfrak{r}_{\Phi}\big(g_1,X,\mo_2(g_2),\mo_2(Y),v^*,\widetilde{\mathcal{I}}(u;g_2,Y)\big).
\end{align}
We let $\widetilde{\mathcal{V}}_{n,m}$ denote the set of all such VN schemes.
\end{definition}

It is immediate that if the features are sufficiently informative, consistency can be established with features where it could not be without.  
Indeed, consider in Example \ref{ex:behaviorchange} features that encode the community memberships of a few vertices (e.g., a few vertices whose correspondences across the two graphs are known a priori).
Combined with spectral methods, these would be sufficient for consistent VN under either behavior regime.
It is also immediate that the fundamental idea presented in Example \ref{ex:behaviorchange} has an analogue when vertex features are available,
as illustrated by the following example.
\begin{example} \label{ex:behaviorchangefeat}
\emph{
For each $n$, consider $n$-vertex random graphs $G_1{\sim}\asymSBM(3,B_1,b^{(1)}_n)$ independent of $G_2{\sim}\asymSBM(3,B_2,b^{(2)}_n)$, where $\asymSBM$ again indicates the stochastic block model with support restricted to the asymmetric graphs.\\
{\bf Case 1.} In this case, corresponding vertices behave similarly across networks.
To wit, let ${\bf F}=(F_n)_{n=n_0}^\infty$ be the sequence of models where $3|n$ and 
$$
B_1=B_2=\begin{pmatrix}p_1&q& q\\q&p_2& q\\q&q& p_1  \end{pmatrix},\,\,
b_n^{(1)}(i)=b_n^{(2)}(i)=\begin{cases}1&\text{ if }i\leq n/3;\\
2&\text{ if }i\in (n/3,2n/3]\\
3&\text{ if }i>n/3
,\end{cases}
$$
$p_1\neq p_2$, $c=n$, and the correspondence is the identity function.\\
{\bf Case 2.} In this case, corresponding vertices behave differently across networks.
To wit, let ${\bf F}=(F_n)_{n=n_0}^\infty$ be the sequence of models where $3|n$ and 
$$
B_1=\begin{pmatrix}p_1&q& q\\q&p_2& q\\q&q& p_1  \end{pmatrix},\,\,B_2=\begin{pmatrix}p_2&q& q\\q&p_1& q\\q&q& p_1  \end{pmatrix},\,\,
b_n^{(1)}(i)=b_n^{(2)}(i)=\begin{cases}1&\text{ if }i\leq n/3;\\
2&\text{ if }i\in (n/3,2n/3]\\
3&\text{ if }i>n/3
,\end{cases}
$$
$p_1\neq p_2$, $c=n$, and the correspondence is the identity function.\\
Similar to Example \ref{ex:behaviorchange}, without features no VN scheme can be consistent for both Cases 1 and 2.
In a similar fashion, if we consider features $X$ and $Y$ defined via
$$X_v=Y_v=\begin{cases}
1&\text{ if }b^{(1)}(v)=1\\
-1&\text{ if }b^{(1)}(v)=2,3,
\end{cases}
$$ 
then joint consistency is achievable for both Cases 1 and 2, for example by relying on features and ignoring graph structure.
However, if we consider features
$X$ and $Y$ defined via
$$X_v=Y_v=\begin{cases}
1&\text{ if }b^{(1)}(v)=1,2\\
-1&\text{ if }b^{(1)}(v)=3,
\end{cases}
$$ 
then joint consistency is again not achievable for both Cases 1 and 2.
}
\end{example}
This example demonstrates that features, in general, are not enough to ensure universal consistency.
Nevertheless, insofar as features supply additional information, they can improve VN performance.
A more thorough examination of the effect and effectiveness of vertex features in VN is beyond the scope of this work, and is the subject of current research.

\section{Discussion}
\label{sec:diss}
In this work, we have introduced a notion of consistency for the vertex nomination task that better reflects the broad range of models under which VN may be deployed.
Rather than being restricted to the stochastic block model structure required in previous formulations of the problem, our framework allows for arbitrary dependence structure both within and between graph pairs, while encompassing the original SBM formulation of the problem. Additionally, we have demonstrated how this framework relates to the well-studied notion of Bayes optimal classifiers in the pattern recognition literature.
Unlike in the classification setting, we have seen that while Bayes optimal VN schemes always exist, no universally consistent scheme exists.
This fact is due essentially to the additional leeway provided by the graph model, in which observing more vertices does not necessarily correspond to receiving more information about the underlying distribution.
This is in contrast to the classification setting studied in \cite{Stone1977} and others \cite{DGL}, in which observing more samples allows more accurate estimation of the underlying distribution and class boundary.
For this reason, one especially interesting line of investigation concerns the nominatable distributions for which larger $n$ does indeed correspond to more information about the underlying graph distribution.
A simple example of this is the initial formulation of the vertex nomination problem, in which observing more vertices allows one to better estimate the model parameters, including the block memberships, and thus more accurately identify the vertices from the interesting block.
We suspect that the essential property at play here is that under models of this sort, each vertex is analogous to a sample from a single distribution, though this may not be in and of itself a sufficient condition for consistency.
For example, in the case of $(G_1,G_2)$ being i.i.d.\ or $\rho$-correlated marginally identical draws from a random dot product graph model with the identity correspondence, each vertex (along with its incident edges) is, in a sense, a noisy sample from the underlying latent position distribution.
Hence, for large $n$, one can estimate the latent positions or their distribution to arbitrary accuracy, and provided that the latent positions of the interesting vertices are suitably separated from those of the rest of the graph, one should have VN consistency for the collection of these latent position models.

More broadly, it would be good to better understand whether there exist families of nominatable distributions $\mathfrak F$ for which certain VN schemes are consistent, and precisely how large these families can be made to be.
In a similar vein, it would be of interest to explore how the dependence structure allowed both within and between graphs influences vertex nomination.
In particular, if one rules out certain pathologically hard dependence structures as considered in Example \ref{ex:behaviorchange}, can one obtain global consistency with respect to this restricted set of distributions?
We hope to explore these questions in future work.

We are also exploring alternative formulations of the VN problem and alternate formulations of the VN loss function.
While the extension to multiple vertices of interest in each network $G_1$ and $G_2$ is straightforward, we are considering several generalizations of the VN problem considered here.
One formulation of prime interest in applications (especially in connectomics and social networks) is as follows:
given a collection of vertices of interest in one graph,
find those that play a similar structural (based on the topology of the underlying network) or functional (based on vertex or edge covariates) role in the other graph.
In addition, as seen in Section \ref{sec:features} the impact on VN consistency (and the potential existence of universally consistent schemes) when incorporating edge and vertex covariates into the VN framework is of prime interest, and a deeper analysis of the VN inference task in this framework is the subject of our current and future work.

The loss function considered in the present work is an analogue of the 0/1 recall-at-$k$ loss function in the information retrieval literature.  
Under this loss function, we have shown that no universally consistent VN rule exists.
It is natural to ask whether alternative loss functions can be considered under which universal consistency is achievable.  
While we conjecture that Example \ref{ex:behaviorchange} will nearly always provide a counterexample to universal consistency, this question remains open and is the subject of current research.


%
\section{Acknowledgments}
This work is supported in part by the D3M program of the Defense Advanced Research Projects Agency (DARPA), NSF grant DMS-1646108, and by the Air Force Research Laboratory and DARPA, under agreement number FA8750-18-2-0035. The U.S. Government is authorized to reproduce and distribute reprints for Governmental purposes notwithstanding any copyright notation thereon. The views and conclusions contained herein are those of the authors and should not be interpreted as necessarily representing the official policies or endorsements, either expressed or implied, of the Air Force Research Laboratory and DARPA, or the U.S. Government.

\appendix
\section{Proofs of main results}
Here we collect the proofs of the two main theorems in this work, Theorems \ref{lem:crossgraph:optimal} and \ref{thm:kconsis}.

\subsection{Theorem \ref{lem:crossgraph:optimal} and Remark \ref{rem:symm}}
\label{sec:apx:pfoptimal}
In this section, we present proofs of Theorem \ref{lem:crossgraph:optimal} and the claim in Remark \ref{rem:symm}.
\begin{proof}[Proof of Theorem \ref{lem:crossgraph:optimal}]
Recall the definition
\begin{equation*} 
\begin{aligned}
&(g_1,[\mo(g_2)])_{w=\mo(v)}
=\Big\{\big(g_1, \tilde g_2\big)\in\gn\times\gm\text{ s.t. }\exists\text{ iso. }\sigma\text{ with }\mo(\tilde g_2)=\sigma(\mo(g_2)),\\
&\hspace{55mm}\text{ and }\sigma(w)=\mo(v^*)\Big\}.
\end{aligned}
\end{equation*}
With $\bf{g}$ defined as in the theorem, note that for each $i\in[k]$,
\begin{align*}
U_{i,\mathbf{g}}^j :&= \left\{(g_1,g_2)\in \left(g_1^{(i)},[\mo(g_2^{(i)})]\right)\text{ s.t. }\rank_{\Phi(g_1,\mo(g_2),\vstar)}(\mo(\vstar)) =j\right\}\\
&=\left\{(g_1,g_2)\in \left(g_1^{(i)},[\mo(g_2^{(i)})]\right)\text{ s.t. }{\Phi(g_1,\mo(g_2),\vstar)}[j]=\mo(\vstar) \right\}\\
&=\Big\{(g_1,g_2)\in \left(g_1^{(i)},[\mo(g_2^{(i)})]\right)\text{ s.t. }\exists\text{ iso. }\sigma\text{ s.t. }\sigma(\mo(g_2^{(i)}))=\mo(g_2)\text{ and }\\
&\hspace{35mm}\sigma(\Phi(g_1^{(i)},\mo(g_2^{(i)}),\vstar)[j])= \mo(v^*)\Big\}\\
&=\left(g_1^{(i)},[\mo(g_2^{(i)})]\right)_{\Phi(g_1^{(i)},\mo(g_2^{(i)}),\vstar)[j]=\mo(v^*)}.
\end{align*}

For each $i\in[k]$ define $p_{i,\Phi}\in[0,1]^m$ via
\begin{equation*} 
p_{i,\Phi}[j,g_1^{(i)},g_2^{(i)}] =p_{i,\Phi}[j] 
:=\PFt\left(U_{i,\bf{g}}^j\,\big|\,(g_1^{(i)},[\mo(g_2^{(i)})])\right).
\end{equation*}
Observe that for each $i\in[k]$, it is immediate that $p_{i,\Phi^*}$ majorizes $p_{i,\Phi}$.
To see this, note that for any fixed $h$, letting $q^h_{i,\Phi}$ be $(p_{i,\Phi}[j])_{j=1}^h$ with entries sorted in descending order, we have
$p_{i,\Phi^*}[j]\geq q_{i,\Phi}^h[j]$ for all $j\in[h]$,
and majorization follows immediately.
With $\mathcal{P}_{\bf{g}}$ denoting the partition induced by $\bf{g}$, this majorization property implies
\begin{align*}
L_h(\Phi,v^*)&=1-\sum_{j=1}^h\p\left(\rank_{\Phi(G_1,\mo(G_2),\vstar)}(\mo(\vstar)) =j\right)\\
&=1-\sum_{\mathcal{P}_{\bf{g}}}\sum_{j=1}^h \p\left[ U_{i,\bf{g}}^j\,\big|\,(g_1^{(i)},[\mo(g_2^{(i)})]) \right] \p\left[ (g_1^{(i)},[\mo(g_2^{(i)})])\right]  \\
&=\sum_{\mathcal{P}_{\bf{g}}}\left(1-\sum_{j=1}^h p_{i,\Phi}[j,g_1^{(i)},\mo(g_2^{(i)})]\right)\p\left[ (g_1^{(i)},[\mo(g_2^{(i)})])\right]\\
&\geq \sum_{\mathcal{P}_{\bf{g}}}\left(1-\sum_{j=1}^h p_{i,\Phi^*}[j,g_1^{(i)},\mo(g_2^{(i)})]\right)\p\left[ (g_1^{(i)},[\mo(g_2^{(i)})])\right]\\
&=L_h(\Phi^*,v^*).
\end{align*}
As $\Phi$, $\bf g$, and $\mo$ were arbitrary, the proof follows.
\end{proof}

\begin{proof}[Proof of Remark \ref{rem:symm}]
Fix $i$, and let $\xi_i=|\Psi(g_1^{(i)},g_i^{(i)},v^*)|$.
Note that for each $j\leq \xi_i$, the set of graphs $(g_1,g_2)\in \left(g_1^{(i)},[\mo(g_2^{(i)})]\right)$ for which $\Psi(g_1,g_2,v^*)[j]=\mathcal{I}(\mo(v^*);\mo(g_2))$ is precisely 
$$\left(g_1^{(i)},[\mo(g_2^{(i)})]\right)_{\Psi(g_1^{(i)},\mo(g_2^{(i)}),\vstar)[j]=\mo(v^*)}.$$
If the tie breaking scheme $T$ satisfies $T[1]=\mo(v^*)$, then the set 
set of graphs $(g_1,g_2)\in \left(g_1^{(i)},[\mo(g_2^{(i)})]\right)$ for which $\Phi^*_T(g_1,g_2,v^*)[j]=\mo(v^*)$ is then also
$$\left(g_1^{(i)},[\mo(g_2^{(i)})]\right)_{\Psi(g_1^{(i)},\mo(g_2^{(i)}),\vstar)[j]=\mo(v^*)}.$$

The proof proceeds as follows.
For an arbitrary VN scheme $\Phi$, and for each $i\in[k]$,
\begin{align*}
U_{i,\mathbf{g}}^j :&= \left\{(g_1,g_2)\in \left(g_1^{(i)},[\mo(g_2^{(i)})]\right)\text{ s.t. }\rank_{\Phi(g_1,\mo(g_2),\vstar)}(\mo(\vstar)) =j\right\}\\
&\subset\Big\{(g_1,g_2)\in \left(g_1^{(i)},[\mo(g_2^{(i)})]\right)\text{ s.t. }\exists\text{ iso. }\sigma\text{ s.t. }\sigma(\mo(g_2^{(i)}))=\mo(g_2)\text{ and }\\
&\hspace{35mm}\sigma\left(\mathcal{I}\left(\Phi(g_1^{(i)},\mo(g_2^{(i)}),\vstar)[j];\mo(g_2^{(i)})\right)\right)\supset \mo(v^*)\Big\}\\
&=\left(g_1^{(i)},[\mo(g_2^{(i)})]\right)_{\mathcal{I}\left(\Phi(g_1^{(i)},\mo(g_2^{(i)}),\vstar)[j];\mo(g_2^{(i)})\right)=\mo(v^*)}
\end{align*}
Letting $T[1]=\mo(v^*)$, this implies that 
\begin{align*}
&1-L_\ell(\Phi,v^*)=\sum_{\mathcal{P}_{\mathbf{g}}}\p\left[\left(\bigcup_{j=1}^\ell U_{i,\mathbf{g}}^j\right)\,\big|\, \left(g_1^{(i)},[\mo(g_2^{(i)})]\right)\right]\p\left[\left(g_1^{(i)},[\mo(g_2^{(i)})]\right)\right]\\
&\leq \sum_{\mathcal{P}_{\mathbf{g}}}\p\left[\bigcup_{j=1}^\ell \left(g_1^{(i)},[\mo(g_2^{(i)})]\right)_{\mathcal{I}\left(\Phi(g_1^{(i)},\mo(g_2^{(i)}),\vstar)[j];\mo(g_2^{(i)})\right)=o(v^*)}\,\big|\,\left(g_1^{(i)},[\mo(g_2^{(i)})]\right)\right]\p\left[\left(g_1^{(i)},[\mo(g_2^{(i)})]\right)\right]\\
&=\sum_{\mathcal{P}_{\mathbf{g}}}\sum_{j\in\mathcal{J}_i} \p\left[\left(g_1^{(i)},[\mo(g_2^{(i)})]\right)_{\mathcal{I}\left(\Phi(g_1^{(i)},\mo(g_2^{(i)}),\vstar)[j];\mo(g_2^{(i)})\right)=o(v^*)} \, |\, \left(g_1^{(i)},[\mo(g_2^{(i)})]\right)\right]\p\left[\left(g_1^{(i)},[\mo(g_2^{(i)})]\right)\right]\\
&\leq \sum_{\mathcal{P}_{\mathbf{g}}}\sum_{j\in\mathcal{J}_i} \underbrace{\p\left[\left(g_1^{(i)},[\mo(g_2^{(i)})]\right)_{\Psi(g_1^{(i)},\mo(g_2^{(i)}),\vstar)[j]=\mo(v^*)} \, |\, \left(g_1^{(i)},[\mo(g_2^{(i)})]\right)\right]}_{ =
\p\left[(g_1,g_2)\in\left(g_1^{(i)},[\mo(g_2^{(i)})]\right)\text{ s.t. }\BayesVN_T(g_1,\mo(g_2),v^*)[j]=\mo(v^*)\,\big|\, \left(g_1^{(i)},[\mo(g_2^{(i)})]\right)\textbf{}\right]
}\p\left[\left(g_1^{(i)},[\mo(g_2^{(i)})]\right)\right]\\
&\leq 1-L_{\ell}(\BayesVN_T,v^*),
\end{align*}
where $\mathcal{J}_i\subset[\ell]$ is the lexicographically smallest set of indices for which 
$$\left\{ \left(g_1^{(i)},\mo(g_2^{(i)})]\right)_{\mathcal{I}\left(\Phi(g_1^{(i)},\mo(g_2^{(i)}),\vstar)[j];\mo(g_2^{(i)})
\right)=o(v^*)} \right\}_{j\in\mathcal{J}_i}$$ are distinct.
\end{proof}

\subsection{Proof of Theorem \ref{thm:kconsis}}
\label{sec:apx:kconsis}

\begin{proof}
Define a probability vector $\xi \in \R^m$ by
$\xi_i = \epsilon_{i}-\epsilon_{i-1}$ for $i \in [m-1]$
(where we take $\epsilon_0:=0$), and let $\xi_m=1-\epsilon_{m-1}$.
Consider asymmetric graphs $(g_1, g_2)\in\mathcal{G}_n\times\gm$
and construct a distribution $F_{c,n,m,\theta}\in\mathcal{N}$ as follows.
\begin{itemize}
        \item[i.] $c=n\wedge m$;
        \item[ii.] The support of $F_{c,n,m,\theta}$ is $(g_1,[\mo(g_2)])$;
        \item[iii.] For each $k\in[m]$ define
\begin{equation*} 
R_{\Phi,k}=\big\{(g_1,\tilde g_2)\in(g_1,[\mo(g_2)])\text{ s.t. }\Phi(g_1,\mo(\tilde g_2),v^*)[k]=\mo(\vstar)\big\}.
\end{equation*}
Then we define $\p_{F_{c,n,m,\theta}}(R_{\Phi,k}):=\xi[k]$ with all elements of $R_{\Phi,k}$ being assigned equal mass under $F_{c,n,m,\theta}$.
\end{itemize}
It is clear then that $L_k(\Phi,v^*)=1-\epsilon_k>1-\frac{k}{m}$.
It is also clear that $\Lstar_k(v^*)\leq \epsilon_{m-k}$.
Indeed, consider $\Phi'$ which is defined by reversing the order provided by $\Phi$; then $L_k(\Phi',v^*)=\epsilon_{m-k}$; which completes the proof.
\end{proof}

\section{Proof of Theorem \ref{thm:stochmatchedcores}}
\label{sec:gmpf}
Herein we will provide a sketch of the proof of Theorem \ref{thm:stochmatchedcores} for completeness.
Let $Q$ be a permutation matrix in $\Pi_n$ that permutes precisely $k$ labels (i.e., $\sum_i Q_{i,i}=n-k$), and let $T$ denote the number of transpositions induced by $Q$.  
By exploiting the cyclic structure of $Q$ acting on $\text{vec}(B)$, we have that 
$$\mathbb{E}\delta(Q)-\mathbb{E}\delta(I_n)=\mathbb{E}\|AQ-QB\|_F^2-\mathbb{E}\|A-B\|_F^2\geq \epsilon\left((n-k)k+\binom{k}{2}-T\right).$$
Combining this expectation bound with the following McDiarmid-like concentration result will yield the proof of Theorem \ref{thm:stochmatchedcores}.
\begin{proposition}[Proposition 3.2 from \cite{kim}]\label{prop:kim}
Let $X_1,\dotsc,X_m$ be a sequence of independent Bernoulli random variables where $\mathbb{E}[X_i]=p_i$. 
Let $f:\{0,1\}^m\mapsto \Re$ be such that changing any $X_i$ to $1-X_i$ changes $f$ by at most 
\[ 
    M=\sup_i \sup_{X_1,\dotsc,X_n} |f(X_1,\dotsc,X_i,\dotsc,X_n) - f(X_1,\dotsc,1-X_i,\dotsc,X_n)|.
\]
Let $\sigma^2 = M^2 \sum_i p_i(1-p_i)$ and let $Y=f(X_1,\dotsc,X_n)$.

Then $$\Pr[|Y-\mathbb{E}[Y]| \geq t \sigma ] \leq 2 e^{-t^2/4}$$ for all $0<t<2\sigma/M$.
\end{proposition}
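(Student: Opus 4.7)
The plan is to prove this variance-sensitive refinement of McDiarmid via Doob's martingale combined with a Bennett-style moment generating function bound. First, I would introduce the filtration $\mathcal{F}_i = \sigma(X_1,\ldots,X_i)$ and define $Z_i = \mathbb{E}[Y \mid \mathcal{F}_i]$, so that $Y - \mathbb{E}[Y] = \sum_{i=1}^m D_i$ with $D_i = Z_i - Z_{i-1}$ a martingale difference sequence. The telescoping reduces the deviation of $Y$ to the tail of a sum of one-dimensional increments, each depending only on the freshly revealed coordinate $X_i$.

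Next I would establish two properties of each increment. Writing $\phi_i(x) := \mathbb{E}[Y \mid X_1,\ldots,X_{i-1}, X_i = x]$, the bounded differences hypothesis gives $|\phi_i(1) - \phi_i(0)| \le M$. Hence (i) $|D_i| \le M$ almost surely, and crucially (ii) the conditional variance satisfies $\mathbb{E}[D_i^2 \mid \mathcal{F}_{i-1}] \le M^2 p_i(1-p_i)$, which is immediate from $X_i \sim \operatorname{Bern}(p_i)$ (conditionally, $D_i$ takes two values differing by at most $M$, weighted by $p_i$ and $1-p_i$). This variance control, absent in the standard McDiarmid estimate, is what makes the exponent scale with $\sigma^2 = M^2 \sum_i p_i(1-p_i)$ rather than the coarse $m M^2$.

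With both controls in place, I would apply the standard Bennett/Bernstein trick. For $|\lambda| \le 1/M$, the elementary bound $e^x \le 1+x+x^2$ for $|x|\le 1$, together with $\mathbb{E}[D_i \mid \mathcal{F}_{i-1}] = 0$ and $1+y \le e^y$, yields
\begin{equation*}
\mathbb{E}[\exp(\lambda D_i) \mid \mathcal{F}_{i-1}] \le \exp\bigl(\lambda^2 M^2 p_i(1-p_i)\bigr).
\end{equation*}
Iterating through the martingale gives $\mathbb{E}[\exp(\lambda(Y - \mathbb{E}Y))] \le \exp(\lambda^2 \sigma^2)$. A Chernoff bound with $\lambda^* = s/(2\sigma^2)$, which lies in the admissible range $(0, 1/M]$ exactly when $s \le 2\sigma^2/M$, produces $\Pr[Y - \mathbb{E}Y \ge s] \le \exp(-s^2/(4\sigma^2))$. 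Setting $s = t\sigma$ and symmetrizing for the lower tail yields the stated $2\exp(-t^2/4)$ bound in the range $0 < t < 2\sigma/M$.

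The main technical obstacle is the variance-sensitive MGF step; the rest is essentially textbook Chernoff optimization. The range restriction $t < 2\sigma/M$ is the natural boundary between the sub-Gaussian and sub-exponential regimes of this Bennett-type inequality: outside it the optimization of $\lambda$ leaves the admissible range, and one only recovers a linear Bernstein-type tail $\exp(-c t)$ rather than the Gaussian tail $\exp(-t^2/4)$. If tracking tight constants were important I would instead appeal directly to Freedman's martingale Bernstein inequality, which yields an exponent of $3t^2/10 \ge t^2/4$ in the same regime, but the looser sub-Gaussian constant above is already sufficient for the intended application to Theorem~\ref{thm:stochmatchedcores}.
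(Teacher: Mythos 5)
Your proof is correct, but note that the paper itself does not prove this statement at all: it is imported verbatim as Proposition 3.2 of the cited reference and used as a black box inside the proof sketch of Theorem \ref{thm:stochmatchedcores}, so there is no in-paper argument to compare against. Your self-contained derivation is the natural one and all the steps check out. The Doob decomposition $Y-\mathbb{E}Y=\sum_i D_i$ with $D_i=Z_i-Z_{i-1}$ is sound; conditionally on $\mathcal{F}_{i-1}$ the increment takes the two values $(1-p_i)\Delta_i$ and $-p_i\Delta_i$ with $\Delta_i=\phi_i(1)-\phi_i(0)$, $|\Delta_i|\le M$, which gives exactly $|D_i|\le M$ and $\mathbb{E}[D_i^2\mid\mathcal{F}_{i-1}]=p_i(1-p_i)\Delta_i^2\le M^2p_i(1-p_i)$; the bound $e^x\le 1+x+x^2$ for $|x|\le 1$ combined with $\mathbb{E}[D_i\mid\mathcal{F}_{i-1}]=0$ and $1+y\le e^y$ yields the stated conditional MGF bound, and telescoping gives $\mathbb{E}[e^{\lambda(Y-\mathbb{E}Y)}]\le e^{\lambda^2\sigma^2}$ for $|\lambda|\le 1/M$. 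The Chernoff optimizer $\lambda^*=t\sigma/(2\sigma^2)$ lies in the admissible range precisely when $t\le 2\sigma/M$, which explains the range restriction in the statement, and symmetrization gives the factor of $2$. Your closing observation that Freedman's inequality would give the sharper exponent $3t^2/10$ in the same regime is also correct, and as you say the constant $1/4$ is all that the application in Appendix \ref{sec:gmpf} requires. The only cosmetic blemish is inherited from the statement itself, which mixes $m$ and $n$ for the number of coordinates; your argument uses a single index consistently, which is the right reading.
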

Indeed, we see that $X_Q:=\delta(Q)-\delta(I_n)$ is a function of $N_Q$ independent Bernoulli random variables, where
$
N_Q =3\left(\binom{k}{2}+k(n-k)\right)\leq 3kn.
$
Let $S_P$ be the sum of these $N_P$ Bernoulli random variables, 
and it follows that $\text{Var}(S_P)\leq N_P/4$.
By setting $t=C\frac{\epsilon kn}{\sigma}$ for an appropriate constant $C>0$ in Proposition \ref{prop:kim}, we have
\begin{align*}
\Pr\left(X_Q\leq 0\right)&\leq\Pr\left(|X_Q-\mathbb{E}(X_Q)|\geq \mathbb{E}(X_Q)\right)\leq 2\text{exp}\left\{ -\Theta(\epsilon^2 kn) \right\}.
\end{align*}
A union bound over all such $Q$ (of which there are $\leq n^{k}$) and over $k$
yields
\begin{align*}
\mathbb{P}\left(\exists\,\,Q\in \Pi_n\setminus\{I_n\}\text{ s.t. } \delta(Q)\leq \delta(I_n)\right)
\leq& \sum_{k=2}^{n}  2 \text{exp}\left\{k\log(n)-\Theta(\epsilon^2 kn)  \right\}= 2\text{exp} \left\{ -\Theta(\epsilon^2 n) \right\},
\end{align*}
as desired.

\subsection{VN schemes with ties}
\label{sec:VNties}

We can incorporate ties into the VN framework as follows.
With ties allowed,
any sensibly-defined vertex nomination scheme should view all vertices in $\mathcal{I}(u;g)$ as being equally good matches to a vertex of interest $v^*$.
To this end, we will view VN schemes as providing \emph{weak orderings} of the elements of $W$:
\begin{definition}
For a set $A$, let $\mathfrak{W}_A$ denote the set of all weak orderings of the elements of $A$ (i.e., the set of all total orderings where ties are allowed).
For $x\in\mathfrak{W}_A$, let $t_x$ be any maximum-length total ordering induced by $x$.
For each $a\in A$, we define 
$$\text{rank}_x(a)=\text{rank}_{t(x)}(a'),$$
where $a=a'$ according to the ordering $x$.
\end{definition}
\begin{example}
If $A=\{a,b,c,d,e\}$ and $x: a>c>d=e>b$, then $t(x): a>c>d>b$, or $t(x): a>c>e>b$; in either case, $\text{rank}_x(a)=1$, $\text{rank}_x(c)=2$, $\text{rank}_x(d)=3$, $\text{rank}_x(e)=3$, and $\text{rank}_x(b)=4$.
\end{example}

A well-defined VN scheme should be ``label-independent'' in the following sense:
Each element of each $\mathcal{I}(\mo(u);\mo(g_2))$ should be ranked identically by $\Phi$, and
these ranks should be independent of the obfuscation function $\mo$.
Formally, we have the following.

\begin{definition}[Vertex Nomination (VN) Scheme]\label{def:VNties}
Let $\mathfrak{O}_W$ be the set of all obfuscating functions $\mo:V_2\mapsto W$ for a fixed $W$.
For $n,m>0$ fixed, a \emph{vertex nomination scheme} is a function $\Phi: \gn \times \gm \times \mathfrak{O}_W \times V_1 \rightarrow \mathfrak{W}_{W}$ satisfying the following properties:  For all $(g_1,g_2)\in\gn\times\gm$,
\begin{itemize}
\item[i.] If $u_1\notin\mathcal{I}(u_2;g_2)$ then either
$\mo(u_1)>\mo(u_2)$ or $\mo(u_1)<\mo(u_2)$ in the ordering provided by $\Phi(g_1,\mo(g_2),v^*)$;
\item[ii.] If $u_1\in\mathcal{I}(u_2;g_2)$ then $\mo(u_1)=\mo(u_2)$ in the ordering provided by $\Phi(g_1,\mo(g_2),v^*)$;
\item[iii.] (consistency criterion) If $\mo_1,\mo_2\in\mathfrak{O}_W$, then for each $v\in V(g_2)$
\begin{align}
\label{eq:consisties}
\text{rank}_{\Phi(g_1,\mo_1(g_2),v^*)}(\mo_1(v))=\text{rank}_{\Phi(g_1,\mo_2(g_2),v^*)}(\mo_2(v)).
\end{align}
\end{itemize}
%
We let $\calVnm$ denote the set of all such VN schemes.
\end{definition}

The VN loss functions and level-$k$ errors are defined as in the totally ordered setting.
To define the Bayes optimal scheme, let $(g_1,g_2)$ be realized from $(G_1,G_2)\sim F_{c,n,m,\theta}\in\mathcal{N}$, and consider a vertex of interest $v^*\in C$
and obfuscating function $\mo:V_2\rightarrow W$.
Letting $\simeq$ denote graph isomorphism, define the set
\begin{equation} 
\begin{aligned}
(g_1,[\mo(g_2)])
&= \left\{\big(g_1, \tilde g_2\big)\in\gn\times\gm\text{ s.t. }\mo(\tilde g_2)\simeq\mo(g_2)\right\} \\
&= \left\{\big(g_1, \tilde g_2\big)\in\gn\times\gm\text{ s.t. }\tilde g_2\simeq g_2\right\}
\end{aligned}
\end{equation}
In order to define the Bayes optimal scheme, we will also need the following restrictions of $(g_1,[\mo(g_2)])$: for each $w\in W,$ and $v\in V_2$ we define
\begin{equation} 
\begin{aligned}
&(g_1,[\mo(g_2)])_{\mathcal{I}(w;\mo(g_2))=\mo(v)}
=\Big\{\big(g_1, \tilde g_2\big)\in\gn\times\gm\text{ s.t. }\exists\text{ iso. }\sigma\text{ with }\mo(\tilde g_2)=\sigma(\mo(g_2)),\\
&\hspace{55mm}\text{ and }\sigma(u)=\mo(v)\text{ for some }u\in\mathcal{I}(w;\mo(g_2))\Big\}.
\end{aligned}
\end{equation}
Note that for a fixed $v$, if $\{\mathcal{I}(w;\mo(g_2))\}_{w\in W'}$ partitions $W$ (for some suitable $W' \subseteq W$), then 
$$\left\{(g_1,[\mo(g_2)])_{\mathcal{I}(w;\mo(g_2))=\mo(v)}\right\}_{w\in W'}$$ partitions $(g_1,[\mo(g_2)])$.
We are now ready to define a Bayes optimal VN scheme.

For ease of notation, in the sequel we will write $\PFt$ or even simply $\p$
in place of $\PFtau$ where there is no risk of ambiguity.
Let 
\begin{equation}
\label{eq:g2}
\mathbf{g}=\left\{\left(g_1^{(i)},g_2^{(i)}\right)\right\}_{i=1}^k
\end{equation} 
be such that the sets 
$$\left\{\left(g_1^{(i)},[\mo(g_2^{(i)})]\right)\right\}_{i=1}^k
$$
 partition $\gn\times\gm$.
We will call this partition $\mathcal{P}_{n,m}$, where we suppress dependence on
$\mathbf{g}$ and $\mo$ for ease of notation.
We will define a Bayes optimal scheme $\BayesVN=\BayesVN_{\bf{g}}$ piecewise on each element of this partition.
For each $i\in[k]$, define (where ties in the argmax's are broken in an arbitrary but nonrandom manner)
\begin{equation} \label{eq:bot} 
\begin{aligned}
&\BayesVN \!\!\left(g_1^{(i)},\mo(g_2^{(i)}),v^*\right)\![1]\! \in \underset{\mathcal{I}(u;\mo(g_2^{(i)})) \subset  W}{\am}\,\,
\p\! \left[ \left(g_1^{(i)},[\mo(g_2^{(i)})]\right)_{\mathcal{I}(u;\mo(g_2^{(i)}))=\mo(v^*)} \bigg| \left(g_1^{(i)},[\mo(g_2^{(i)})]\right) \right] \\
&\BayesVN \!\!\left(g_1^{(i)},\mo(g_2^{(i)}),v^*\right)\![2]\!
\in \!\!\!\!\!\!\!\!\underset{\mathcal{I}(u;\mo(g_2^{(i)})) \subset W\setminus \{ \BayesVN[1] \} }{\am}
\p\! \left[ \left(g_1^{(i)},[\mo(g_2^{(i)})]\right)_{\mathcal{I}(u;\mo(g_2^{(i)}))=\mo(v^*)} \bigg| \left(g_1^{(i)},[\mo(g_2^{(i)})]\right) \right] \\
&\hspace{50mm}\vdots \\
&\BayesVN \!\!\left(g_1^{(i)},\mo(g_2^{(i)}),v^*\right)\![k]\!
\in \!\!\!\!\!\!\!\!\underset{\mathcal{I}_{\mo\left(g_2\right)}(u) \subset W\setminus\{\cup_{i=1}^{k-1}\Phi^*[i]\} }{\am}
\p\! \left[ \left(g_1^{(i)},[\mo(g_2^{(i)})]\right)_{\mathcal{I}(u;\mo(g_2^{(i)}))=\mo(v^*)} \bigg| \left(g_1^{(i)},[\mo(g_2^{(i)})]\right) \right]
\end{aligned}
\end{equation}
so that the ranking provided by $\Phi^*$ is 
(where $\Phi^{*}(g_1^{(i)},\mo(g_2^{(i)}),v^*)[i]=\{u_1^{(i)},\ldots,u_{n_i}^{(i)}\}$)
$$\underbrace{u_1^{(1)}=u_2^{(1)}=\cdots=u_{n_1}^{(1)}}_{n_1}> 
\underbrace{u_1^{(2)}=u_2^{(2)}=\cdots=u_{n_2}^{(2)}}_{n_2}>
\cdots>
\underbrace{u_1^{(k)}=u_2^{(k)}=\cdots=u_{n_k}^{(k)}}_{n_k}.  
$$
For each element 
$$(g_1',g_2')\in( g^{(i)}_1,[\mo(\tilde g_2^{(i)})])\setminus\{(g_1^{(i)},g_2^{(i)})\},$$ choose any isomorphism $\sigma$ such that $\mo(g_2')=\sigma(\mo(g_2^{(i)}))$, and define
$$\BayesVN(g_1,\mo(g_2'),v^*)=\sigma(\BayesVN(g_1^{(i)},\mo(g_2^{(i)}),v^*)),$$
noting that $\BayesVN(g_1,\mo(g_2'),v^*)$ is independent of the choice of isomorphism $\sigma$.
The next proposition states that, modulo ties, the definition of $\Phi^*_{\bf{g}}$ is independent of the choice of $\bf{g}$.
\begin{proposition}
\label{prop:ind}
Let $\mathfrak{o}\in\mathfrak{O}_W$ be an obfuscating function, and let
$$\mathbf{g}=\left\{\left(g_1^{(i)},g_2^{(i)}\right)\right\}_{i=1}^k\neq \tilde{\mathbf{g}}=\left\{\left( g_1^{(i)},\tilde g_2^{(i)}\right)\right\}_{i=1}^k$$ be such that the sets 
$$\left\{\left(g_1^{(i)},[\mo(g_2^{(i)})]\right)\right\}_{i=1}^k, \left\{\left(g_1^{(i)},[\mo(\tilde g_2^{(i)})]\right)\right\}_{i=1}^k$$
 partition $\gn\times\gm$.
 Suppose that $(G_1,G_2)\sim F_{c,n,m,\theta}\in\mathcal{N}$, and consider a vertex of interest $v^*\in C$.
 Then there exists a fixed strategy for breaking ties in the argmax's for $\Phi_{\bf{g}}^*$ and $\Phi_{\tilde{\bf{g}}}^*$ that yields $\Phi_{\bf{g}}^*=\Phi_{\tilde{\bf{g}}}^*$.
 In particular, under any such tie-breaking strategy, we have that $L_h(\BayesVN_{\bf{g}},v^*)=L_h(\BayesVN_{\tilde{\bf{g}}},v^*)$ for all $h\in[m-1]$.
\end{proposition}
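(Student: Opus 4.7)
The plan is to exhibit a tie-breaking convention under which $\BayesVN_{\mathbf{g}}$ and $\BayesVN_{\tilde{\mathbf{g}}}$ agree as functions on $\gn\times\gm$; equality of the level-$h$ errors is then immediate for every $h\in[m-1]$. The auxiliary ``any such tie-breaking'' claim in the proposition will follow from the separate observation that the orbits tied in any argmax in \eqref{eq:bot} carry identical posterior mass and so contribute identically to $L_h$; hence the particular choice among tied maximizers does not affect the loss. After a harmless reindexing I may assume that the two partitions match class-by-class, i.e.\ $(g_1^{(i)},[\mo(g_2^{(i)})]) = (g_1^{(i)},[\mo(\tilde g_2^{(i)})])$ for every $i\in[k]$, so that there is an isomorphism $\sigma_i$ with $\mo(\tilde g_2^{(i)}) = \sigma_i(\mo(g_2^{(i)}))$.

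The key step is a transport identity for the posteriors used in \eqref{eq:bot}. For every $u\in V(\mo(g_2^{(i)}))$, the bijection $\sigma_i$ sends orbits to orbits, i.e.\ $\sigma_i(\mathcal{I}(u;\mo(g_2^{(i)}))) = \mathcal{I}(\sigma_i(u);\mo(\tilde g_2^{(i)}))$, and the conditioning sets used in the Bayes construction satisfy the set equality
$$(g_1^{(i)},[\mo(g_2^{(i)})])_{\mathcal{I}(u;\mo(g_2^{(i)}))=\mo(v^*)} = (g_1^{(i)},[\mo(\tilde g_2^{(i)})])_{\mathcal{I}(\sigma_i(u);\mo(\tilde g_2^{(i)}))=\mo(v^*)}.$$
Consequently, at every rank $j$ of the iterative argmax, the posterior probabilities used to build $\BayesVN_{\mathbf{g}}(g_1^{(i)},\mo(g_2^{(i)}),v^*)$ are in exact correspondence, via $u\mapsto\sigma_i(u)$, with those used to build $\BayesVN_{\tilde{\mathbf{g}}}(g_1^{(i)},\mo(\tilde g_2^{(i)}),v^*)$.

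With this in hand I would specify the tie-breaking in $\BayesVN_{\tilde{\mathbf{g}}}$ as the pushforward under $\sigma_i$ of the fixed tie-breaking in $\BayesVN_{\mathbf{g}}$, applied levelwise. A straightforward induction on $j$ then gives $\BayesVN_{\tilde{\mathbf{g}}}(g_1^{(i)},\mo(\tilde g_2^{(i)}),v^*)[j] = \sigma_i\big(\BayesVN_{\mathbf{g}}(g_1^{(i)},\mo(g_2^{(i)}),v^*)[j]\big)$ for all $j$. But this is precisely the extension rule that $\BayesVN_{\mathbf{g}}$ itself uses to propagate its definition from the base point $(g_1^{(i)},g_2^{(i)})$ to $(g_1^{(i)},\tilde g_2^{(i)})$, so the two schemes coincide at the representative of $\tilde{\mathbf{g}}$, and a further application of the extension rule spreads equality across the whole equivalence class, and hence across $\gn\times\gm$.

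The main obstacle is the compatibility bookkeeping for the tie-breaking: one must verify that the pushforward-via-$\sigma_i$ rule is independent of the choice of $u$ within a given tied orbit, and that it remains consistent with the extension-by-isomorphism step of \eqref{eq:bot} after any two witnessing isomorphisms are composed. Both follow from the orbit-level nature of the ranking output by $\BayesVN$, together with the observation that any automorphism of $\mo(g_2^{(i)})$ fixes orbits setwise, so that two different witnessing isomorphisms $\sigma, \tau$ with $\sigma(\mo(g_2^{(i)})) = \tau(\mo(g_2^{(i)})) = \mo(g_2')$ send each orbit to the same orbit. Once these compatibility checks are in place, the remainder of the argument is purely formal.
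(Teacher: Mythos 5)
The paper states Proposition~\ref{prop:ind} without proof, so there is nothing to compare against line by line; judged on its own, your argument is correct and is clearly the intended one. Your transport identity is verified by noting that $\tau$ witnesses membership in $(g_1^{(i)},[\mo(\tilde g_2^{(i)})])_{\mathcal{I}(\sigma_i(u);\mo(\tilde g_2^{(i)}))=\mo(v^*)}$ if and only if $\sigma=\tau\circ\sigma_i$ witnesses membership in $(g_1^{(i)},[\mo(g_2^{(i)})])_{\mathcal{I}(u;\mo(g_2^{(i)}))=\mo(v^*)}$, and your compatibility check (two witnessing isomorphisms differ by an automorphism, which fixes each orbit setwise --- the same fact the authors invoke when asserting that the extension-by-isomorphism step is well defined) closes the only real gap; your remark that tied maximizers carry equal conditional mass correctly handles the final claim about $L_h$.
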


Lastly, the following theorem shows that this scheme (or schemes) is indeed Bayes optimal.
The proof is analogous to the totally ordered setting and is thus omitted.

\begin{theorem} 
\label{lem:crossgraph:optimalties}
Let $\mathfrak{o}\in\mathfrak{O}_W$ be an obfuscating function, and let
$$\mathbf{g}=\left\{\left(g_1^{(i)},g_2^{(i)}\right)\right\}_{i=1}^k$$ be such that the sets 
$$\left\{\left(g_1^{(i)},[\mo(g_2^{(i)})]\right)\right\}_{i=1}^k$$
 partition $\gn\times\gm$.
Let $\BayesVN=\BayesVN_\mathbf{g}$ be as defined in Equation~\eqref{eq:bo}.
Suppose that $(G_1,G_2)\sim F_{c,n,m,\theta}\in\mathcal{N}$, and consider a vertex of interest $v^*\in C$.
We have that $L_h(\BayesVN,v^*)=\Lstar_h(\vstar)$ for all $h\in[m-1]$ and all obfuscating functions $\mathfrak{o}$.
\end{theorem}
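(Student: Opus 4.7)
My plan is to adapt the proof of Theorem \ref{lem:crossgraph:optimal} to the weak-ordering setting, replacing its per-vertex majorization argument by a per-equivalence-class majorization argument. The essential first observation is that any admissible VN scheme $\Phi \in \calVnm$ must (by property (ii) of Definition \ref{def:VNties}) assign the same weak-order position to every vertex in each class $\mathcal{I}(u;\mo(g_2))$, and by the rank convention attached to $\mathfrak{W}_W$ the rank of $\mo(v^*)$ coincides with the position of $\mathcal{I}(\mo(v^*);\mo(g_2))$ in the induced ordering of classes of $\mo(g_2)$. Thus the event $\{\text{rank}_{\Phi(g_1,\mo(g_2),v^*)}(\mo(v^*)) = j\}$ is exactly the event that $\mathcal{I}(\mo(v^*);\mo(g_2))$ is the $j$-th class in $\Phi$'s output.

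Fixing $\mo$ and $\mathbf{g}$ as in the hypothesis, for each partition element $(g_1^{(i)},[\mo(g_2^{(i)})])$ and each position $j$ in the class ranking I would define
\begin{equation*}
U_{i,\mathbf{g}}^{j,\Phi} := \left\{ (g_1,g_2) \in (g_1^{(i)},[\mo(g_2^{(i)})]) : \text{rank}_{\Phi(g_1,\mo(g_2),v^*)}(\mo(v^*)) = j \right\}.
\end{equation*}
Using property (iii) of Definition \ref{def:VNties} together with the isomorphism-transport rule employed in defining $\BayesVN$ (whereby the output of a well-defined scheme on an arbitrary $(g_1,g_2)$ in the partition element is the image of its output on $(g_1^{(i)}, g_2^{(i)})$ under any isomorphism sending $\mo(g_2^{(i)})$ to $\mo(g_2)$), I would show that $U_{i,\mathbf{g}}^{j,\Phi}$ equals $(g_1^{(i)},[\mo(g_2^{(i)})])_{\mathcal{I}(u;\mo(g_2^{(i)}))=\mo(v^*)}$, where $\mathcal{I}(u;\mo(g_2^{(i)}))$ is the class occupying position $j$ in $\Phi(g_1^{(i)},\mo(g_2^{(i)}),v^*)$.

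The remainder is a majorization argument identical in spirit to that in Appendix \ref{sec:apx:pfoptimal}. Setting $p_{i,\Phi}[j] := \p[U_{i,\mathbf{g}}^{j,\Phi} \mid (g_1^{(i)},[\mo(g_2^{(i)})])]$, the construction of $\BayesVN$ in Equation \eqref{eq:bot} selects classes one at a time in decreasing order of their conditional probability $\p[(g_1^{(i)},[\mo(g_2^{(i)})])_{\mathcal{I}(u;\mo(g_2^{(i)}))=\mo(v^*)} \mid (g_1^{(i)},[\mo(g_2^{(i)})])]$, so $p_{i,\BayesVN}$ is precisely this sequence sorted in non-increasing order and therefore majorizes $p_{i,\Phi}$ for every $\Phi \in \calVnm$. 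Writing
\begin{equation*}
1 - L_h(\Phi,v^*) = \sum_{i=1}^k \p[(g_1^{(i)},[\mo(g_2^{(i)})])] \sum_{j=1}^h p_{i,\Phi}[j]
\end{equation*}
and applying the majorization term-wise in $i$ yields $L_h(\BayesVN,v^*) \leq L_h(\Phi,v^*)$ for every $\Phi$ and every $h$, whence equality with $\Lstar_h(v^*)$. Invoking Proposition \ref{prop:ind} then removes any residual dependence of the conclusion on the choice of $\mathbf{g}$.

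The main obstacle, and the only genuine departure from the totally-ordered proof, lies in pinning down the rank convention under ties: specifically, that a vertex in a weak ordering inherits the position of its tied group rather than a cumulative count, uniformly in the choice of representative linear extension $t_x$. Once that is verified from the definition of $\text{rank}_x$, the partition, isomorphism-transport, and majorization steps transfer verbatim from the proof of Theorem \ref{lem:crossgraph:optimal}, and no external tie-breaking ordering analogous to $T$ in Remark \ref{rem:symm} is required, since ties are now permitted in the scheme's output.
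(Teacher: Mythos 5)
Your proposal is correct and is precisely the argument the paper intends: the paper omits this proof as ``analogous to the totally ordered setting,'' and your adaptation---replacing the per-vertex sets $U_{i,\mathbf{g}}^j$ by per-class sets $(g_1^{(i)},[\mo(g_2^{(i)})])_{\mathcal{I}(u;\mo(g_2^{(i)}))=\mo(v^*)}$, noting that properties (i)--(iii) of Definition \ref{def:VNties} force every admissible scheme to weakly order exactly these classes so that $p_{i,\Phi}$ is a permutation of $p_{i,\BayesVN}$, and then running the same majorization over the partition $\mathcal{P}_{\mathbf{g}}$---is exactly that adaptation, mirroring the class-level sets already used in the proof of Remark \ref{rem:symm}. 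Your observations about the dense-rank convention and the non-necessity of the tie-breaker $T$ are also correct.
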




%

%

\bibliographystyle{plain}
\bibliography{biblio.bib}
\end{document}